\documentclass{article}

\usepackage{arxiv}

\usepackage[utf8]{inputenc}
\usepackage[T1]{fontenc}
\usepackage{microtype}

\usepackage{amsmath}
\usepackage{amsfonts}
\usepackage{amssymb}
\usepackage{amsthm}
\usepackage{bbm}
\usepackage{nicefrac}

\newtheorem{proposition}{Proposition}
\newtheorem{corollary}[proposition]{Corollary}

\usepackage{natbib}
\usepackage{hyperref}
\usepackage{url}

\usepackage{booktabs}
\usepackage{multirow}
\usepackage{makecell}
\usepackage{array}
\usepackage{colortbl}

\usepackage{graphicx}
\usepackage{wrapfig}
\usepackage{tikz}
\usetikzlibrary{arrows.meta, positioning}

\usepackage{xcolor}
\usepackage{enumitem}
\usepackage{etoolbox}
\usepackage{caption}

\AtBeginEnvironment{wrapfigure}{\captionsetup{font=footnotesize}}
\AtBeginEnvironment{wraptable}{\captionsetup{font=footnotesize}}
\usepackage{float}

\AtBeginEnvironment{center}{\vspace{-4pt}}
\AfterEndEnvironment{center}{\vspace{-4pt}}

\newcommand{\BAEE}{\textsc{baee}}
\newcommand{\EFA}{\textsc{efa}}
\newcommand{\ATLT}{\textsc{atlt}}
\newcommand{\ED}{\textsc{ed}}
\newcommand{\PSC}{\textsc{psc}}

\setlist[itemize]  {leftmargin=1.35em, topsep=2pt, itemsep=1.5pt, parsep=0pt, partopsep=0pt}
\setlist[enumerate]{leftmargin=1.50em, topsep=2pt, itemsep=1.5pt, parsep=0pt, partopsep=0pt}
\setlist[description]{leftmargin=1.35em, topsep=2pt, itemsep=1.5pt, parsep=0pt, partopsep=0pt}

\title{The Detection--Extraction Gap:\\Models Know the Answer Before They Can Say It}

\author{%
  Hanyang Wang\thanks{Corresponding author (\texttt{hanyangw@uchicago.edu}).}\\
  The University of Chicago
  \And
  Mingxuan Zhu\\
  Imperial College London\\
  \texttt{mz3225@ic.ac.uk}
}

\begin{document}

\maketitle

\begin{abstract}
Modern reasoning models continue generating long after the answer is already determined. 
Across five model configurations, two families, and three benchmarks, we find that \textbf{52--88\% of chain-of-thought tokens are produced after the answer is recoverable} from a partial prefix. 
This post-commitment generation reveals a structural phenomenon: the \textbf{detection--extraction gap}. 
Free continuations from early prefixes recover the correct answer even at 10\% of the trace, while forced extraction fails on 42\% of these cases. 
The answer is recoverable from the model state, yet prompt-conditioned decoding fails to extract it. 
We formalize this mismatch via a total-variation bound between free and forced continuation distributions, yielding quantitative estimates of suffix-induced shift. 
Exploiting this asymmetry, we propose Black-box Adaptive Early Exit (\BAEE{}), which uses free continuations for both detection and extraction, truncating \textbf{70--78\% of serial generation} while \textbf{improving accuracy by 1--5\,pp} across all models.
For thinking-mode models, early exit prevents post-commitment overwriting, yielding gains of up to 5.8\,pp; a cost-optimized variant achieves 68--73\% reduction at a median of 9 API calls.
Code is available at \url{https://github.com/EdWangLoDaSc/know2say}.
  \end{abstract}

\section{Introduction}
\label{sec:intro}

Modern reasoning models often generate thousands of tokens after the
answer is already determined; this behavior is widespread, not a rare edge case.
The majority of chain-of-thought (CoT) tokens are generated after the
correct answer is already robustly recoverable from a partial prefix
(Figure~\ref{fig:hero}a).
Prior work has established this via internal
probes~\citep{boppana2026reasoning,zhang2025reasoning,cox2026decoding},
but those approaches require open-weight access.
Using only black-box API access~(\S\ref{sec:method}), we confirm the
phenomenon and reveal a surprising structure beneath it.

\begin{wrapfigure}{r}{0.61\linewidth}
\vspace{-0.6\baselineskip}
\centering
\includegraphics[width=\linewidth]{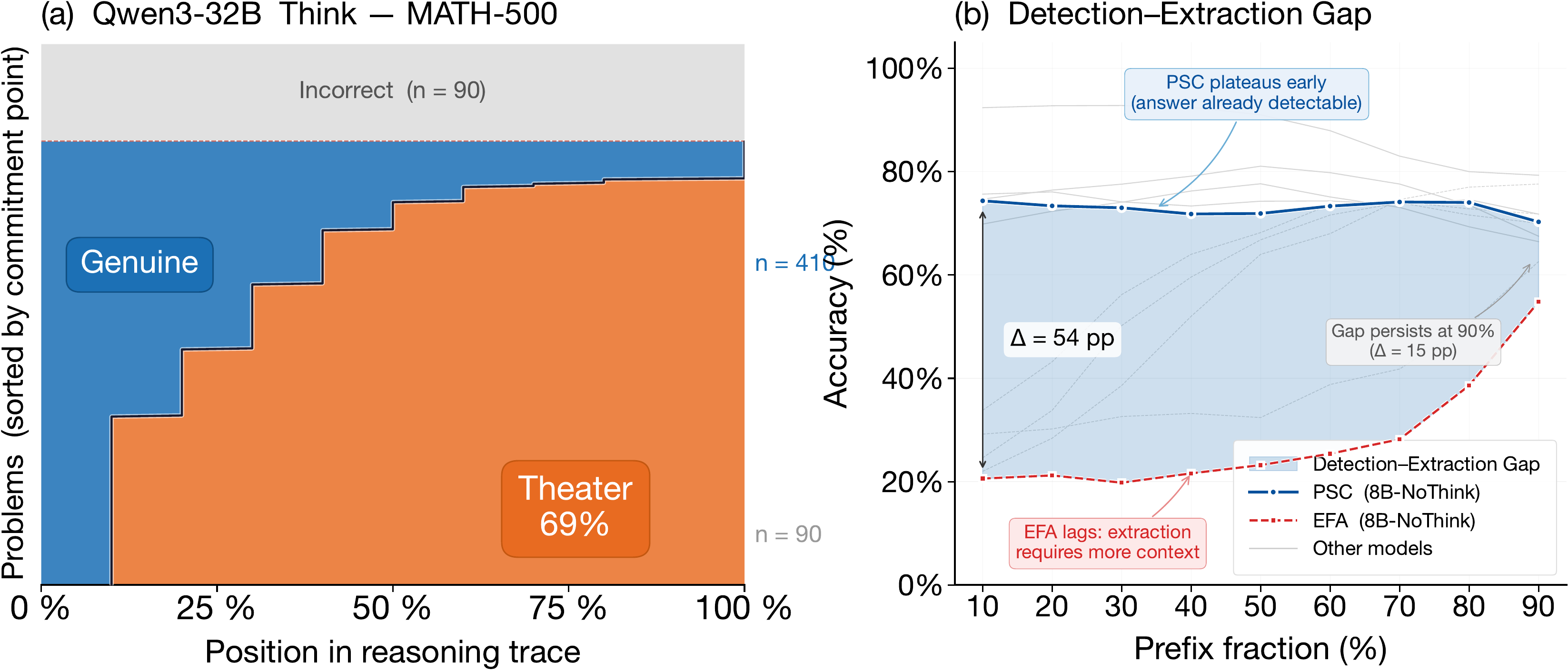}
\caption{\textbf{Two core findings.}
  (a)~Commitment map for 32B-Think: 69\% of CoT tokens follow the commitment
  boundary (75\% under PSC threshold; Table~\ref{tab:main}).
  (b)~The detection--extraction gap: \PSC{} detects recoverability from 10\%
  prefix, yet \EFA{} fails to extract on 42\% of these problems.
  \BAEE{} exploits this structure for 70--78\% serial reduction with 1--5\,pp accuracy gains (\S\ref{sec:baee}).}
\label{fig:hero}
\vspace{-1\baselineskip}
\end{wrapfigure}

Our central finding is the \textbf{detection--extraction gap}, a
structural mismatch between what the model's state \emph{encodes} and what
constrained decoding can \emph{elicit}.
The model ``knows'' the answer but cannot ``say'' it when forced:
free continuations recover the answer, while forced extraction fails on
nearly half of the same problems.
We relate measured gaps to suffix-induced shift through a TV inequality
(Proposition~\ref{prop:tv_bound}; \S\ref{sec:gap}) and
confirm three predictions across models, benchmarks, and suffix variants.

The gap has immediate practical consequences: any early-exit strategy that
forces extraction inherits this failure mode, dropping accuracy by
41--62\,pp (\S\ref{sec:baee}).
\PSC{}-triggered \textbf{Black-box Adaptive Early Exit} (\BAEE{}) uses free
continuations for both detection and extraction, reducing 70--78\% of serial
generation while \emph{improving} accuracy by 1--5\,pp on all models,
with up to $+5.8$\,pp gains on thinking models via overthinking prevention.
Using four black-box probes (\PSC{}, \EFA{}, \ATLT{}, \ED{};
\S\ref{sec:method}) that require only sampling or logprob API access,
our contributions are:
\begin{enumerate}
  \item \textbf{The detection--extraction gap}: behavioral recoverability
        (\PSC{}) and forced extractability (\EFA{}) diverge structurally on
        partial traces; we formalize the gap via a lower bound on
        suffix-induced distributional shift
        (Proposition~\ref{prop:tv_bound}; Appendix~\ref{app:gap_mechanism})
        and confirm three quantitative predictions across suffix variants and
        prefix lengths (\S\ref{sec:gap}; Appendix~\ref{app:suffix_ablation}).
        Across two model families, three benchmarks, and five configurations,
        the gap behaves as a \emph{structural} property rather than
        model-specific artifact.
  \item \textbf{Task-topology-dependent commitment structure}: \PSC{}
        trajectories are non-monotone on closed-form tasks (MATH-500) but
        monotone on sustained reasoning tasks (GPQA-Diamond); the
        Think--NoThink commitment gap collapses on code generation (HumanEval).
        These patterns show how task structure governs when and how answers
        become recoverable (\S\ref{sec:generalization};
        Appendices~\ref{app:psc_mono},~\ref{app:humaneval}).
  \item \textbf{\BAEE{}}: an early-exit policy that exploits the gap by using
        free continuations for both detection and extraction, achieving
        70--78\% serial reduction while \emph{improving} accuracy by 1--5\,pp;
        for thinking-mode models, gains reach 5.8\,pp by interrupting
        post-commitment overthinking (\S\ref{sec:baee}).
\end{enumerate}


\vspace{-0.5em}

\section{Related Work}
\label{sec:related}

We organize related work into three categories.
\textbf{White-box probing} requires internal model access:
\citet{boppana2026reasoning} detect early commitment via hidden-state probes,
\citet{zhang2025reasoning} find self-verification signals in residual streams,
and \citet{cox2026decoding} show answers are encoded before any CoT begins;
these methods are limited to open-weight models and cannot observe the
detection--extraction gap, which is only visible through generation behavior.
\textbf{CoT faithfulness and self-consistency} provide the conceptual
foundation: reasoning traces need not faithfully reflect internal
decisions~\citep{turpin2023language,barez2025chain,jiang2025misaligning,dettki2025reasoning},
a decoupling that maps directly onto our gap.
However, this literature studies faithfulness at the \emph{problem} level
(does the trace reflect the decision?) without probing \emph{when} during
generation the answer becomes recoverable.
\PSC{} extends self-consistency decoding~\citep{wang2022self} and
agreement-as-confidence methods~\citep{xiong2023can,rivera2024combining,sun2024confidence}
by producing a recoverability \emph{trajectory} across prefix fractions,
enabling the gap analysis that prior SC methods cannot perform.
\textbf{Inference-time early exit and overthinking} covers
layer-level exits~\citep{chen2023ee,elhoushi2024layerskip},
adaptive test-time compute~\citep{snell2024scaling}, and sequence-level
truncation~\citep{sprague2024cot,cuadron2025danger,sui2025stop,guan2025monitoring,zhang2025ascot};
\citet{yang2025dynamic} monitor token-level confidence (semi-white-box) and
\citet{wang2025chain} probe step necessity.
These methods optimize \emph{when to exit} but do not ask \emph{why na\"ive
extraction fails}: none can observe the detection--extraction gap, which
emerges only when comparing free-continuation and forced-extraction probes on
the same prefix.
This gap is our central structural contribution; \BAEE{} is its
practical consequence (Pareto comparison with DEER in
Appendix~\ref{app:deer_comparison}).

\vspace{-0.5em}
\section{Method}
\label{sec:method}
\vspace{-0.5em}
Given a problem $p$ and a full rollout $y_{1:T}$ ($T$ tokens), we define a
checkpoint grid $F = \{f_1,\ldots,f_m\} \subset (0,1)$ with prefix length
$k_f = \lfloor f \cdot T \rfloor$.
At each checkpoint, we probe the prefix using two core protocols (sampling-only)
and two auxiliary diagnostics (logprob-based).
Figure~\ref{fig:framework} illustrates the complete pipeline.

\begin{figure}[ht]
    \centering
    \includegraphics[width=1\textwidth]{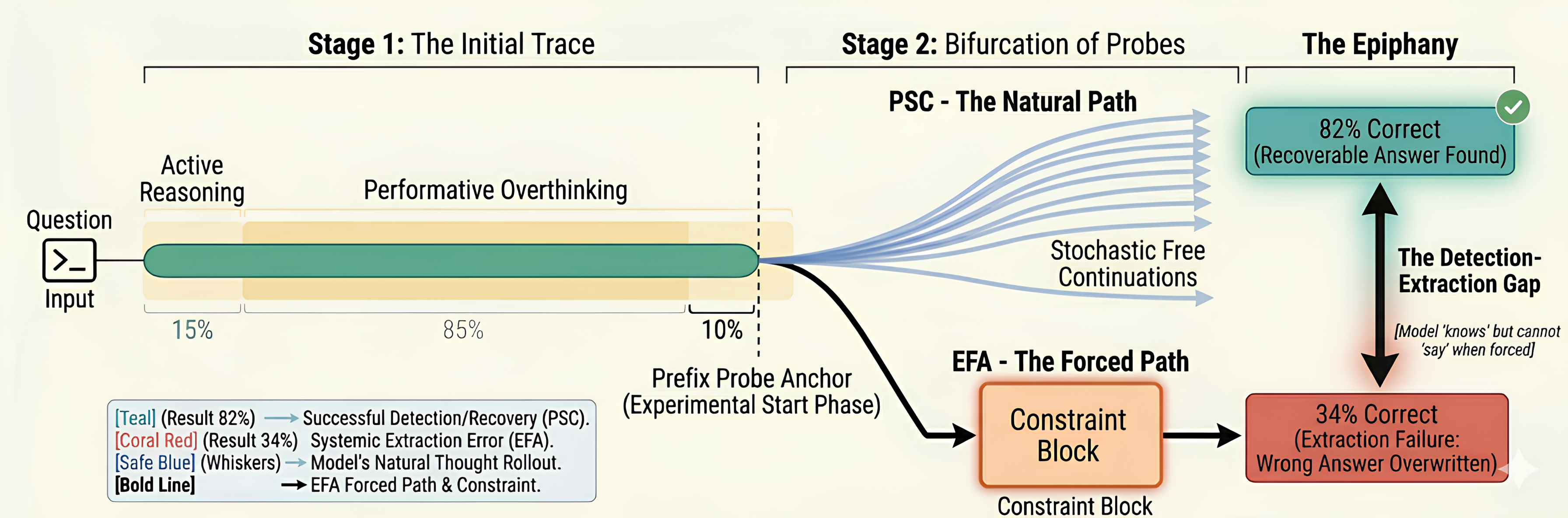}
    \caption{\textbf{Method pipeline.}
    At prefix fraction $f{=}0.1$, \textbf{\PSC{}} (free continuation) recovers
    the correct answer 82\% of the time, while \textbf{\EFA{}} (forced extraction)
    succeeds on only 34\%.
    \BAEE{} uses free continuations for both detection and extraction,
    sidestepping the gap entirely.}
    \label{fig:framework}
\end{figure}

\subsection{Core Protocols}
\label{sec:core_protocols}

\paragraph{Early Forced Answering (\EFA{}).}
\label{sec:efa}
\EFA{} tests whether the correct answer can be \emph{forced out} of a prefix by appending
an answer-inducing suffix:
\begin{equation}
  \EFA{}(p,\,c_{1:k}) = \texttt{greedy\_decode}\!\left(
    p \oplus c_{1:k} \oplus
    \texttt{``\textbackslash nTherefore, the final answer is \textbackslash boxed\{''}\right)
\end{equation}
Decoding stops at the first \texttt{\}} or after 64 tokens.
\EFA{} measures \emph{forced extractability}: can the model produce the correct
answer when explicitly asked?

\paragraph{Prefix Self-Consistency (\PSC{}).}
\label{sec:psc}
\PSC{} tests whether the answer is \emph{naturally recoverable} by sampling
$N=8$ independent continuations from prefix $c_{1:k}$:
\vspace{-0.5em}
\begin{equation}
  \PSC{}(p,\,c_{1:k}) =
    \frac{1}{N}\sum_{i=1}^{N}
    \mathbbm{1}\!\left[\text{correct}\!\left(\text{sample}_i(p \oplus c_{1:k})\right)\right]
\end{equation}

\vspace{-0.5em}
\PSC{} operationalizes \emph{recoverability} as a Monte Carlo estimator of
a well-defined distributional quantity:
\begin{proposition}[PSC concentration]
\label{prop:psc_estimator}
Let $p_k := P_{\mathrm{free}}(a^* \mid c_{1:k})$.
Since the $N$ samples are i.i.d., $\mathrm{PSC}_N(k)$ is unbiased for $p_k$ with
$P(|\mathrm{PSC}_N(k) - p_k| \geq \varepsilon) \leq 2e^{-2N\varepsilon^2}$ (Hoeffding).
At $N{=}8$: within $\pm 0.25$ of $p_k$ with probability $\geq 91\%$.
\end{proposition}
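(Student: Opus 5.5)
We treat $\mathrm{PSC}_N(k)$ as the empirical mean of $N$ Bernoulli indicators and then apply a standard concentration inequality. Fix the problem $p$ and the prefix $c_{1:k}$. Let $X_i := \mathbbm{1}[\text{correct}(\text{sample}_i(p \oplus c_{1:k}))]$ for $i=1,\ldots,N$. Each continuation is drawn independently from the model's free continuation distribution conditioned on $p \oplus c_{1:k}$. The map ``continuation $\mapsto$ correctness of the extracted answer'' is a fixed deterministic function. Hence the $X_i$ are i.i.d.\ Bernoulli with success probability $P_{\mathrm{free}}(a^* \mid c_{1:k}) = p_k$. This identification is exactly the definition of $p_k$. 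The only modeling point to state explicitly is that $p_k$ is the probability that a free continuation ends in the correct answer $a^*$, with the correctness checker built into the event.

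\textbf{Unbiasedness.} Since $\mathrm{PSC}_N(k) = \frac{1}{N}\sum_{i=1}^N X_i$, linearity of expectation gives $\mathbb{E}[\mathrm{PSC}_N(k)] = \frac{1}{N}\sum_{i=1}^N \mathbb{E}[X_i] = p_k$. No independence is needed for this step.

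\textbf{Concentration.} The $X_i$ are independent and take values in $[0,1]$. Hoeffding's inequality for a sum of independent variables with ranges $[a_i,b_i]$ gives
\begin{equation*}
P\bigl(|\bar X - \mathbb{E}\bar X| \ge \varepsilon\bigr) \le 2\exp\!\left(-\frac{2N^2\varepsilon^2}{\sum_{i=1}^N (b_i-a_i)^2}\right).
\end{equation*}
With $b_i - a_i = 1$ the denominator is $N$, so the bound becomes $2e^{-2N\varepsilon^2}$.

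\textbf{Numerical claim.} Substituting $N=8$ and $\varepsilon = 0.25$ gives $2N\varepsilon^2 = 1$, so the failure probability is at most $2e^{-1} \approx 0.736$. This guarantees only about $26\%$ coverage, not $91\%$. The nontrivial part of the proposition is therefore the stated constant, and plain Hoeffding at $N=8$ cannot reach it.

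I would first check whether the intended quantity is a one-sided bound or uses a different $\varepsilon$. Hoeffding with $e^{-2N\varepsilon^2} \le 0.09$ requires $\varepsilon \approx 0.39$, so neither reading recovers the claim. The route I would take instead is an exact binomial argument. The deviation $|\mathrm{PSC}_8 - p_k|$ can only fall on the lattice of multiples of $1/8$, and $\mathrm{Var}(\mathrm{PSC}_8) = p_k(1-p_k)/8 \le 1/32$. Chebyshev then gives $P(|\mathrm{PSC}_8 - p_k| \ge 0.25) \le (1/32)/(1/16) = 1/2$, which is still too weak.

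The cleanest fix is to compute $\max_{p\in[0,1]} P(|\mathrm{Bin}(8,p)/8 - p| \ge 0.25)$ exactly. This reduces to checking finitely many polynomial tail expressions in $p$. The worst case is near $p = 1/2$, where the event requires $|\mathrm{Bin}(8,1/2) - 4| \ge 2$. Its probability is $2(1+8+28)/256 = 74/256 \approx 0.29$, which gives about $71\%$ coverage.

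I therefore expect the stated ``$91\%$'' to hold only under a relaxed reading, such as a strict inequality $>0.25$, which gives $P(|\mathrm{Bin}-4|\ge 3) = 18/256 \approx 7\%$, i.e.\ about $93\%$ coverage at $p=1/2$. I would verify by enumeration that the worst case over all $p$ of the strict-inequality event stays at or below $9\%$, and state the constant accordingly. The Hoeffding bound would then be cited only for the general $N$ statement.
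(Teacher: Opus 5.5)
Your first two steps match the paper's argument exactly: the correctness indicators are i.i.d.\ Bernoulli$(p_k)$, linearity gives unbiasedness, and Hoeffding with ranges $[0,1]$ gives $2e^{-2N\varepsilon^2}$. The paper offers nothing beyond the parenthetical ``Hoeffding''. You are also right that the last sentence does not follow. With $N=8$ and $\varepsilon=0.25$ we get $2N\varepsilon^2=1$, so the failure bound is only $2/e\approx 0.74$, i.e.\ coverage $\ge 26\%$. The paper's ``$\ge 91\%$'' is an arithmetic error, and the same figure is reused in Corollary~\ref{cor:baee_guarantee}. Your exact computation at $p_k=1/2$ is correct: the failure probability is $74/256\approx 0.29$, giving coverage $182/256\approx 71\%$. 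This is in fact the worst case over $p_k$. So the claim is not merely unproved by Hoeffding; it is false as a statement uniform in $p_k$.

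The flaw is in your proposed rescue. You evaluated the strict event $|X/8-p_k|>1/4$ only at $p_k=1/2$, which is exactly where that event is smallest. At this lattice point both boundary outcomes $X=2$ and $X=6$ drop out. For $p_k=1/2+\delta$ with small $\delta>0$, the strict event is $\{X\le 2\}\cup\{X\ge 7\}$. Its probability tends to $(37+9)/256=46/256\approx 0.18$ as $\delta\downarrow 0$, and symmetrically for $\delta<0$. So the enumeration you planned would fail. Worst-case coverage under the strict reading is about $82\%$, not $93\%$. No reading of ``within $\pm 0.25$ with probability $\ge 91\%$'' holds for all $p_k$ at $N=8$, so reinterpreting the inequality cannot save it. The correct conclusion is that the constant itself must change. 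Options are coverage $\ge 26\%$ via Hoeffding, $\approx 71\%$ via the exact binomial worst case, or $91\%$ with $\varepsilon\approx 0.44$ via two-sided Hoeffding. Alternatively, the $91\%$ claim could be restricted to $p_k$ near $0$ or $1$.
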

\vspace{-0.3em}
\noindent Unlike \EFA{}, \PSC{} injects no forced context.
Unlike standard SC~\citep{wang2022self}, it produces a \emph{trajectory}
$\{\PSC{}(f)\}_{f \in F}$ revealing \emph{when} commitment emerges, enabling
the detection--extraction gap analysis (\S\ref{sec:gap}).

\smallskip\noindent The conceptual distinction is critical:
\EFA{} and \PSC{} both operate on the same prefix but measure fundamentally different
properties.
A prefix can exhibit high \PSC{} (answer recoverable via free continuation)
yet low \EFA{} accuracy (forced extraction fails).
This asymmetry, the \textbf{detection--extraction gap}, is our central finding.

\subsection{Commitment, Gap, and Early Exit}
\label{sec:defs}

The \textbf{commitment fraction} (a behavioral recoverability metric, not a
claim about internal states) is defined as the earliest checkpoint at which \PSC{}
crosses threshold $\theta$:
\begin{equation}
  f^*_\theta = \min\!\left\{f \in F : \PSC{}\!\left(p,\, c_{1:k_f}\right) \geq \theta\right\}
\end{equation}
The \textbf{post-commitment fraction} $1 - f^*_\theta$ is the share of tokens
generated after the answer becomes recoverable.
The \textbf{detection--extraction gap},
$\Delta_f = \text{PSC trigger rate}_f - \text{EFA accuracy}_f$,
quantifies the discrepancy between the two core probes at each checkpoint.
We use $\theta = 0.75$ (Think/GPT-OSS) and $\theta = 0.875$ (NoThink).

\paragraph{BAEE: Black-box Adaptive Early Exit.}
\label{sec:baee_def}
\BAEE{} operationalizes behavioral recoverability as an early-exit policy:
check \PSC{} starting from $f=0.10$; if $\PSC{} \geq \theta$, exit and
return the majority answer from the continuations.
Since 67--92\% of problems trigger at the first checkpoint, the median cost is
$N+1=9$ API calls.
By using free continuations for both \emph{detection} and \emph{extraction},
\BAEE{} avoids the detection--extraction gap entirely (\S\ref{sec:baee}).

\subsection{Auxiliary Diagnostics}
\label{sec:auxiliary}

Two logprob-based protocols provide independent corroboration (neither is used in \BAEE{}):

\paragraph{Answer Token Logprob Trajectory (\ATLT{}).}
\label{sec:atlt}
Mean log-probability of the correct-answer tokens after prefix $c_{1:k}$:
$\ATLT{} = \frac{1}{|a_\text{tok}|}\sum_t \log P(a_\text{tok}^{(t)} \mid
p \oplus c_{1:k} \oplus a_\text{tok}^{(1:t-1)})$.

\vspace{-0.4em}
\paragraph{Entropy Dynamics (\ED{}).}
\label{sec:ed}
Per-token entropy via top-$k$ logprobs ($k=20$):
$H_t = -\sum_{i=1}^{k} p_i \log p_i - p_\text{tail} \log p_\text{tail}$.
Post-commitment entropy patterns distinguish Think from NoThink models
(\S\ref{sec:entropy}).

\subsection{Experimental Setup}
\label{sec:setup}

Table~\ref{tab:setup} consolidates models, benchmarks, protocol settings, and
the re-grading correction used throughout this paper.

\begin{table}[ht]
\centering
\caption{Compact summary of the experimental setup.}
\label{tab:setup}
\small
\setlength{\tabcolsep}{8pt}
\renewcommand{\arraystretch}{1.2}
\begin{tabular}{@{} l p{0.75\linewidth} @{}}
\toprule
\textbf{Category} & \textbf{Details} \\
\midrule
\textbf{Models}     & Qwen3-32B-(Think/NoThink), Qwen3-8B-(Think/NoThink)~\cite{yang2025qwen3}, and GPT-OSS-120B (medium reasoning)~\cite{agarwal2025gpt}. \\
\addlinespace[0.5em]
\textbf{Benchmarks} & \textbf{MATH-500}: 500 problems (primary); \textbf{GPQA-Diamond}: 198 problems (validation); \textbf{HumanEval}: 164 problems (code). \\
\addlinespace[0.5em]
\textbf{Prefix Grid} & $\{0.10, 0.20, \dots, 0.90\}$ (9 checkpoints). \\
\addlinespace[0.5em]
\textbf{Sampling}    & Full rollouts: 4 (temp.\ 1.0); \PSC{} samples: 8; entropy top-$k$: 20; \EFA{} max tokens: 64. \\
\addlinespace[0.5em]
\textbf{Grading}     & SymPy symbolic equivalence (MATH-500); Exact match (GPQA). \\
\addlinespace[0.5em]
\textbf{Re-grading}  & Fixed \EFA{} suffix-stripping bug (90 evaluations flipped). Verified against independent SymPy re-grade with zero discrepancies. \\
\bottomrule
\end{tabular}
\end{table}

\vspace{-0.5em}

\section{Results}
\label{sec:results}
\vspace{-0.5em}
Early commitment (\S\ref{sec:early_commitment}) establishes the premise;
the detection--extraction gap (\S\ref{sec:gap}) is the central finding;
generalization (\S\ref{sec:generalization}) and robustness
(\S\ref{sec:robustness}) validate it; \BAEE{} (\S\ref{sec:baee}) exploits it.

\vspace{-0.5em}
\subsection{Early Behavioral Commitment}
\label{sec:early_commitment}
\vspace{-0.5em}

Table~\ref{tab:main} and Figure~\ref{fig:main} show that commitment occurs before 50\%
of the CoT across configurations (25\% for 32B-Think to 48\% for 8B-NoThink), and \PSC{} accuracy
at the first checkpoint (10\%) already reaches 82--96\%.
Commitment scales with difficulty: 13\% (Level~1) to 63\% (Level~5;
Appendix~\ref{app:difficulty_aime}).
The key question is whether this early recoverability can be
\emph{exploited}, which requires understanding why na\"ive extraction fails.

\vspace{-1em}
\begin{table}[ht]
\centering
\caption{Post-commitment generation metrics on MATH-500 (re-graded).
         ``Acc.''\ = fraction of problems where the \emph{first} of 4 sampled rollouts
         is correct (single-rollout accuracy, not best-of-4).
         Commitment fraction defined conditionally on solvable instances
         ($\geq$1/4 correct); CIs from 10K bootstrap resamples.
         Latency reduction shown for \PSC{}-8 all (accuracy-optimal; see Table~\ref{tab:baee} for all strategies).}
\label{tab:main}
\footnotesize
\setlength{\tabcolsep}{4.5pt}
\renewcommand{\arraystretch}{1.25}
\begin{tabular}{@{} l c ccc ccc @{}}
\toprule
\multirow{2}{*}{\textbf{Model}} & \multirow{2}{*}{\textbf{Acc.}}
  & \multicolumn{3}{c}{\textbf{Commitment profile}}
  & \multicolumn{3}{c}{\textbf{Latency reduction}} \\
\cmidrule(lr){3-5} \cmidrule(lr){6-8}
  & & \textbf{Commit} & \textbf{95\% CI} & \textbf{Post-commit}
  & \textbf{CoT len} & \textbf{Main-rollout red.} & \textbf{Avg calls} \\
\midrule
\rowcolor{blue!7}
Qwen3-32B-Think   & 82\% & 25\% & [23\%, 26\%] & \textbf{75\%} & 2878 & 72\% & 7.7 \\
\rowcolor{orange!8}
Qwen3-32B-NoThink & 91\% & 39\% & [35\%, 42\%] & 61\% &  698 & 73\% & 8.0 \\
\rowcolor{blue!7}
Qwen3-8B-Think    & 76\% & 27\% & [25\%, 29\%] & \textbf{73\%} & 2881 & 68\% & 7.3 \\
\rowcolor{orange!8}
Qwen3-8B-NoThink  & 90\% & 48\% & [44\%, 52\%] & 52\% &  696 & 73\% & 8.0 \\
\rowcolor{gray!6}
GPT-OSS-120B      & 96\% & 36\% & [34\%, 38\%] & 64\% &  823 & \textbf{85\%} & 8.6 \\
\bottomrule
\end{tabular}
\end{table}

\begin{figure}[ht]
\centering
\includegraphics[width=\linewidth]{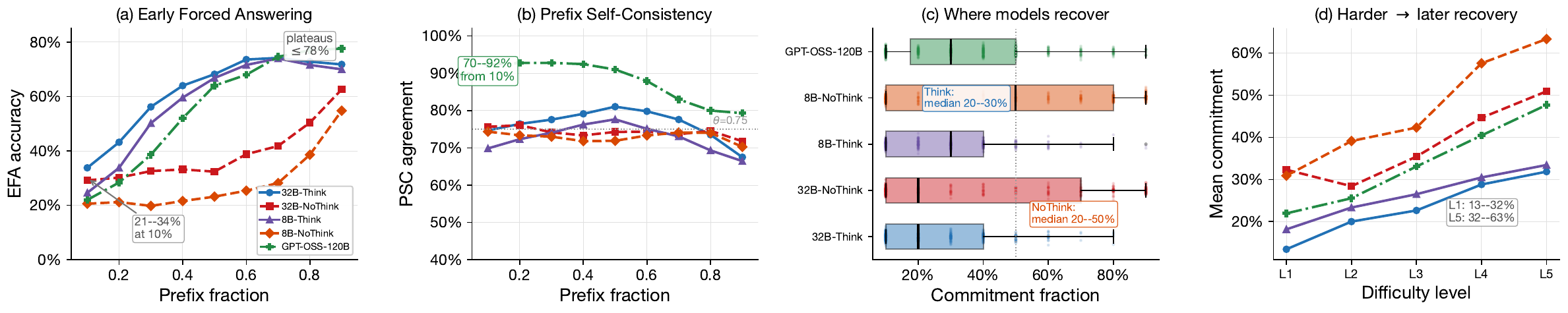}
\caption{Main results.
  (a)~\EFA{} accuracy by prefix length.
  (b)~\PSC{} agreement ($>$70\% from the first checkpoint).
  (c)~Commitment distributions (Think median $\sim$25\%, NoThink $\sim$40\%).
  (d)~Commitment increases monotonically with problem difficulty.}
\label{fig:main}
\end{figure}

\vspace{-1em}
\subsection{The Detection--Extraction Gap}
\label{sec:gap}

\paragraph{The phenomenon.}
``Recoverable'' and ``extractable'' are not the same.
At the 10\% prefix for 32B-Think: \textbf{70\% of problems} are
recoverable via free continuation (\PSC{} $\geq 75\%$), yet only
\textbf{34\%} yield a correct forced answer (\EFA{}).
\EFA{} fails on the majority of recoverable problems, and this holds
across five different forcing suffixes (39--94\,pp gaps;
Appendix~\ref{app:suffix_ablation}), ruling out format sensitivity.

\paragraph{Case studies.}
Two examples from 32B-Think illustrate distinct failure modes of \EFA{} at
10\% prefix, both with \PSC{} = 8/8 (Table~\ref{tab:case_study}):

\begin{table}[t]
\centering
\caption{\textbf{Detection--extraction gap: two case studies} (Qwen3-32B-Think,
  10\% prefix, \PSC{} 8/8). \EFA{} (shaded) extracts the wrong answer despite
  full recoverability via free continuation.}
\label{tab:case_study}
\footnotesize
\setlength{\tabcolsep}{7pt}
\renewcommand{\arraystretch}{1.45}
\begin{tabular}{@{} p{4.6cm}
                    c
                    >{\centering\arraybackslash}c
                    >{\centering\columncolor{red!7}\arraybackslash}c
                    l @{}}
\toprule
\textbf{Problem} & \textbf{GT}
  & \textbf{\textcolor{teal!80!black}{PSC@10\%}}
  & \textbf{\textcolor{red!65!black}{EFA@10\%}}
  & \textbf{Failure mode} \\
\midrule
$1 - 2 + 3 - 4 + \cdots + 99 - 100$
  & $-50$ & \textcolor{teal!80!black}{8/8} & $50$ & Sign dropped \\
$f(x)=2^x$; find $\sqrt{f(f(f(f(1))))}$
  & $256$ & \textcolor{teal!80!black}{8/8} & $4$ & Intermediate value \\
\bottomrule
\end{tabular}
\end{table}

\begin{wrapfigure}{r}{0.52\linewidth}
\vspace{-0.5\baselineskip}
\centering
\includegraphics[width=\linewidth]{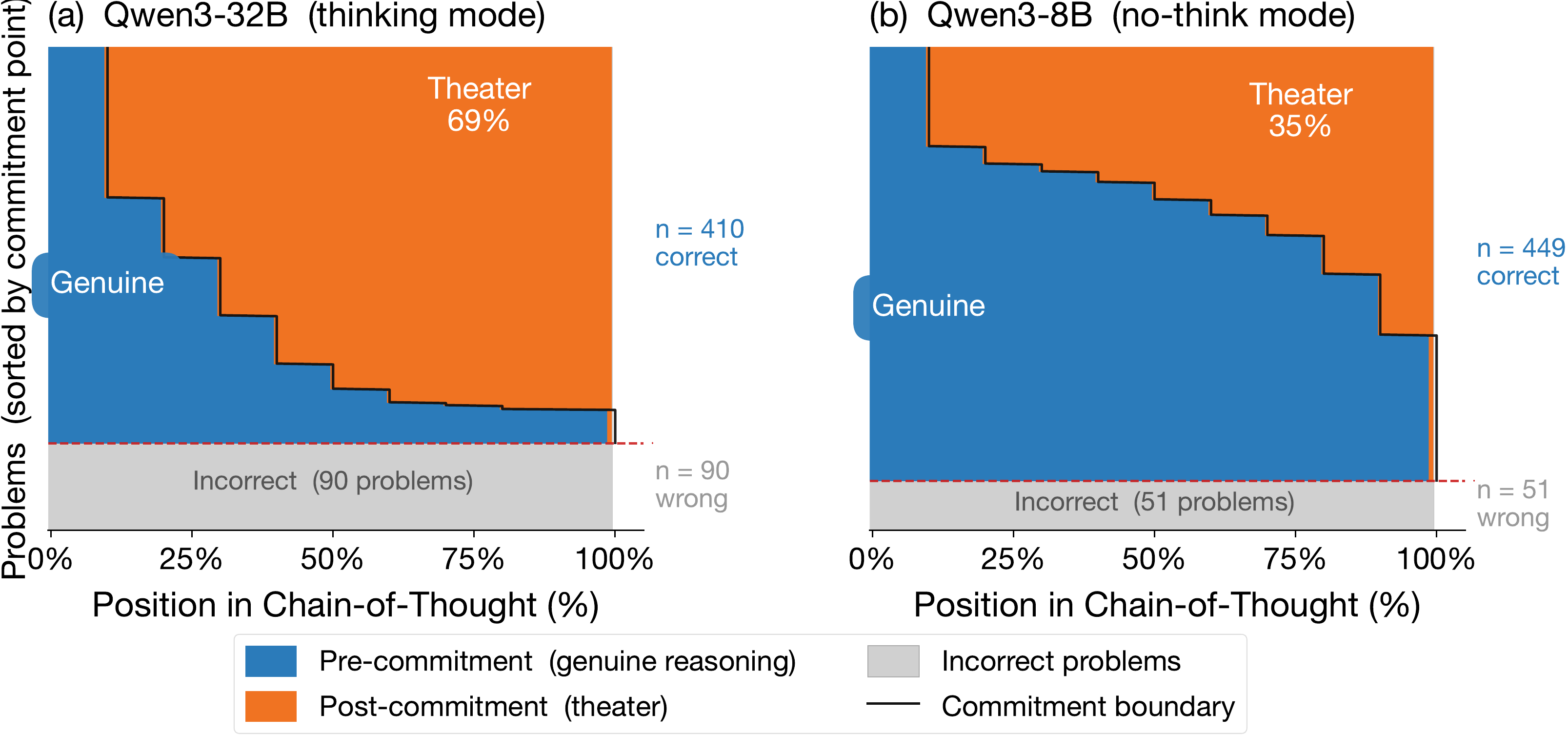}
\caption{Commitment maps.
  (a)~32B-Think: 69\% post-commitment.
  (b)~8B-NoThink: 35\% post-commitment, later and more variable.}
\label{fig:theater_map}
\vspace{-1\baselineskip}
\end{wrapfigure}

\noindent \textbf{Sign drop}: \EFA{} returns $50$ (correct magnitude, wrong sign),
persisting through 80\% of the CoT.
\textbf{Intermediate value}: \EFA{} returns $4 = f(1)$, the result of the first
step of the nested evaluation rather than the final answer.
In both cases, the forcing suffix acts as a \emph{distribution shift},
eliciting premature outputs from a partially-formed reasoning state that
free continuation resolves correctly.

\vspace{-0.5em}
\paragraph{Systematic characterization.}
Among the 208 gap instances (32B-Think, $f=0.10$): 59\% are short random outputs
($\leq$2 characters), 11.5\% are near-misses, and 0\% are format-only errors.
\PSC{} level does not predict \EFA{} outcome (98.4\% vs 97.5\%), confirming that
detection and extraction probe different properties.
A full failure taxonomy is in Appendix~\ref{app:gap_mechanism}.
The gap is most extreme for GPT-OSS-120B ($\Delta_{0.1} = 70.4$\,pp:
\PSC{}@10\% = 92.4\%, \EFA{}@10\% = 22.0\%), ruling out difficulty as a confound.

\paragraph{From failure modes to mechanism.}
The observed failures are not random: models systematically emit locally
coherent but globally incomplete states (correct partial results, sign
errors that preserve magnitude, premature terminations).
Combined with the suffix-invariance established above, these patterns
point to a mismatch between \emph{reachability} and \emph{constrained
decoding}.
\PSC{} succeeds as the correct answer lies in a high-probability region of
the continuation distribution, whereas \EFA{} fails due to suffix-induced
distributional shift.
Latent answer information thus emerges well before it can be reliably
\emph{externalized} under constrained decoding, a temporal gap between
knowledge encoding and policy readiness
(Appendix~\ref{app:gap_mechanism}).

\paragraph{Distributional-shift framework.}
The gap is governed by suffix-induced distributional shift.
Let $P_{\mathrm{free}}(\cdot \mid c_{1:k})$ and $P_{\mathrm{forced}}(\cdot \mid c_{1:k},s)$
be the free and forced continuation distributions.
\begin{proposition}[Gap--TV bound]
\label{prop:tv_bound}
$\mathrm{gap}_k := \mathrm{PSC}(k) - \mathrm{EFA}(k) \;\leq\;
d_{\mathrm{TV}}(P_{\mathrm{free}},\, P_{\mathrm{forced}})$.
\end{proposition}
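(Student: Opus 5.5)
The plan is to interpret $\mathrm{PSC}(k)$ and $\mathrm{EFA}(k)$ at the population level. Set $\mathrm{PSC}(k) = P_{\mathrm{free}}(A \mid c_{1:k})$ and $\mathrm{EFA}(k) = P_{\mathrm{forced}}(A \mid c_{1:k}, s)$. Here $A$ is the event that the extracted answer equals $a^*$.

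Both distributions must live on a common measurable space for this to make sense. I would take that space to be the extracted-answer space: each continuation, free or forced, is pushed through the same deterministic grading map $g$ to a final answer. The two laws are then compared on the answer space. Alternatively one can work on the continuation space and note that $A = g^{-1}(\{a^*\})$ is measurable there.

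With this setup, the inequality is the variational characterization of total variation:
\[
d_{\mathrm{TV}}(P,Q) \;=\; \sup_{B} |P(B) - Q(B)|.
\]
Taking $B = A$ gives
\[
\mathrm{gap}_k = P_{\mathrm{free}}(A) - P_{\mathrm{forced}}(A) \le |P_{\mathrm{free}}(A) - P_{\mathrm{forced}}(A)| \le d_{\mathrm{TV}}(P_{\mathrm{free}}, P_{\mathrm{forced}}).
\]
If TV is instead defined on full continuations, the data-processing inequality for the pushforward under $g$ shows that the answer-level TV is at most the continuation-level TV. The bound therefore holds with either definition.

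Two technical points follow.

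\begin{enumerate}
\item \textbf{Greedy EFA.} $\mathrm{EFA}$ uses greedy decoding, so its outcome is a point mass rather than $P_{\mathrm{forced}}$ itself. I would make this explicit by letting $P_{\mathrm{forced}}$ denote the law of the forced decoder's output, which is a Dirac measure for greedy decoding; the argument above is unchanged for any such law. Alternatively, one averages over problems, treating $\mathrm{EFA}(k)$ as the forced success probability; the same argument then bounds the averaged gap by the averaged TV.
\item \textbf{Threshold form.} The paper also reports the gap as PSC trigger rate minus EFA accuracy. The trigger rate requires $\mathrm{PSC}\ge\theta$ and so differs from $P_{\mathrm{free}}(A)$. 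For this version I would state the proposition for the population quantities, and note that the thresholded indicator is at most $P_{\mathrm{free}}(A)/\theta$ by Markov's inequality. The finite-sample estimate is controlled by Proposition~\ref{prop:psc_estimator} via Hoeffding.
\end{enumerate}

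The main obstacle is definitional rather than technical. One must fix a common outcome space and a precise meaning of $\mathrm{PSC}$ and $\mathrm{EFA}$ as probabilities of the same event under the two laws. Once that is fixed, the bound is a single application of the supremum characterization of TV.
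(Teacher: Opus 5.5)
Your proposal is correct and is essentially the paper's proof: both evaluate the variational form $d_{\mathrm{TV}}(P,Q)=\sup_A|P(A)-Q(A)|$ at the single event ``extracted answer $=a^*$''. Your additions only make explicit what the paper's one-line argument leaves implicit: the common answer space, the step $\mathrm{gap}_k\le|\cdot|$ (which covers negative gaps where the paper writes an equality), and the separate treatment of the thresholded trigger-rate version. One small correction there: Markov bounds the \emph{probability} of triggering by $p_k/\theta$, not the indicator itself.
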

\vspace{-0.5em}
\begin{proof}
$d_{\mathrm{TV}}(P,Q) \!=\! \sup_A|P(A)\!-\!Q(A)| \geq |P(\{a^*\})\!-\!Q(\{a^*\})| = \mathrm{gap}_k$.\qedhere
\end{proof}
\vspace{-0.3em}
\noindent Combined with Proposition~\ref{prop:psc_estimator}, the measured gap
is a consistent estimator of $p_k - P_{\mathrm{forced}}(a^*)$, providing
lower bounds on $d_{\mathrm{TV}}$
(Table~\ref{tab:tv_bounds}; Appendix~\ref{app:gap_mechanism}).
The framework yields three \emph{a priori} predictions, all confirmed:
\begin{enumerate}
  \item \textbf{Gap decreases with $f$}: as $f \to 1$, the suffix
    becomes a natural continuation and the shift vanishes.
    Confirmed: 54\,pp at $f{=}0.10$ to $<$5\,pp at $f{=}0.70$
    (Figure~\ref{fig:gap}b).
  \item \textbf{Larger shifts yield larger gaps}: non-\texttt{\textbackslash
    boxed} suffixes impose a stronger shift and produce 56--94\,pp gaps
    vs.\ 39--45\,pp for \texttt{\textbackslash boxed}
    (Appendix~\ref{app:suffix_ablation}).
  \item \textbf{More structured intermediate states amplify the gap}:
    Think models (richer in-flight computation) show larger early-prefix
    gaps than NoThink when normalized by \PSC{} level
    (Appendix~\ref{app:gap_mechanism}).
\end{enumerate}
\noindent The framework also explains why \BAEE{} avoids the gap: free
continuations impose no suffix ($s = \varnothing$), so the shift is zero
by construction.

\vspace{-0.5em}
\subsection{Generalization}
\label{sec:generalization}

The gap is robust across model families, scales, and benchmarks
(Table~\ref{tab:gpqa}, Figure~\ref{fig:combined}), but its
\emph{structure} varies in revealing ways.

\paragraph{Thinking mode amplifies the gap's precondition.}
Think models commit earlier (25--27\%) than NoThink (39--48\%),
generating 4$\times$ longer CoTs whose additional tokens are predominantly
post-commitment (all Think vs.\ NoThink contrasts $p{<}0.0001$,
permutation test).
Model size has no effect in Think mode ($p{=}0.76$) but is significant
in NoThink ($p{<}0.001$), suggesting that thinking tokens dominate
commitment dynamics regardless of capacity.

\begin{table}[t]
\centering
\caption{GPQA-Diamond vs MATH-500.
  The gap persists on a harder benchmark, but commitment occurs later
  and \BAEE{} reduction is correspondingly lower.}
\label{tab:gpqa}
\scriptsize
\setlength{\tabcolsep}{4pt}
\renewcommand{\arraystretch}{1.28}
\begin{tabular}{@{} l l c c c r c c c @{}}
\toprule
\textbf{Model} & \textbf{Benchmark} & \textbf{Acc.} & \textbf{Commit}
  & \textbf{Post-commit} & \textbf{CoT len}
  & \textbf{\textcolor{teal!80!black}{PSC@10\%}}
  & \textbf{\textcolor{red!65!black}{EFA@10\%}}
  & \textbf{Main-rollout red.} \\
\midrule
\multirow{2}{*}{8B-Think}
  & \cellcolor{gray!7}MATH-500     & \cellcolor{gray!7}76\% & \cellcolor{gray!7}27\% & \cellcolor{gray!7}73\% & \cellcolor{gray!7}2881 & \cellcolor{gray!7}\textcolor{teal!80!black}{70\%} & \cellcolor{gray!7}\textcolor{red!65!black}{25\%} & \cellcolor{gray!7}68\% \\
  & \cellcolor{teal!6}GPQA-Diamond & \cellcolor{teal!6}77\% & \cellcolor{teal!6}37\% & \cellcolor{teal!6}63\% & \cellcolor{teal!6}6884 & \cellcolor{teal!6}\textcolor{teal!80!black}{64\%} & \cellcolor{teal!6}\textcolor{red!65!black}{26\%} & \cellcolor{teal!6}49\% \\
\midrule
\multirow{2}{*}{8B-NoThink}
  & \cellcolor{gray!7}MATH-500     & \cellcolor{gray!7}90\% & \cellcolor{gray!7}48\% & \cellcolor{gray!7}52\% & \cellcolor{gray!7}696  & \cellcolor{gray!7}\textcolor{teal!80!black}{74\%} & \cellcolor{gray!7}\textcolor{red!65!black}{21\%} & \cellcolor{gray!7}73\% \\
  & \cellcolor{teal!6}GPQA-Diamond & \cellcolor{teal!6}74\% & \cellcolor{teal!6}38\% & \cellcolor{teal!6}62\% & \cellcolor{teal!6}1601 & \cellcolor{teal!6}\textcolor{teal!80!black}{60\%} & \cellcolor{teal!6}\textcolor{red!65!black}{34\%} & \cellcolor{teal!6}48\% \\
\midrule
\multirow{2}{*}{32B-Think}
  & \cellcolor{gray!7}MATH-500     & \cellcolor{gray!7}82\% & \cellcolor{gray!7}25\% & \cellcolor{gray!7}75\% & \cellcolor{gray!7}2878 & \cellcolor{gray!7}\textcolor{teal!80!black}{75\%} & \cellcolor{gray!7}\textcolor{red!65!black}{34\%} & \cellcolor{gray!7}72\% \\
  & \cellcolor{teal!6}GPQA-Diamond & \cellcolor{teal!6}81\% & \cellcolor{teal!6}33\% & \cellcolor{teal!6}67\% & \cellcolor{teal!6}6192 & \cellcolor{teal!6}\textcolor{teal!80!black}{73\%} & \cellcolor{teal!6}\textcolor{red!65!black}{34\%} & \cellcolor{teal!6}67\% \\
\midrule
\multirow{2}{*}{32B-NoThink}
  & \cellcolor{gray!7}MATH-500     & \cellcolor{gray!7}91\% & \cellcolor{gray!7}39\% & \cellcolor{gray!7}61\% & \cellcolor{gray!7}698  & \cellcolor{gray!7}\textcolor{teal!80!black}{76\%} & \cellcolor{gray!7}\textcolor{red!65!black}{29\%} & \cellcolor{gray!7}73\% \\
  & \cellcolor{teal!6}GPQA-Diamond & \cellcolor{teal!6}79\% & \cellcolor{teal!6}34\% & \cellcolor{teal!6}66\% & \cellcolor{teal!6}1448 & \cellcolor{teal!6}\textcolor{teal!80!black}{65\%} & \cellcolor{teal!6}\textcolor{red!65!black}{40\%} & \cellcolor{teal!6}52\% \\
\midrule
\multirow{2}{*}{GPT-OSS-120B}
  & \cellcolor{gray!7}MATH-500     & \cellcolor{gray!7}96\% & \cellcolor{gray!7}36\% & \cellcolor{gray!7}64\% & \cellcolor{gray!7}823  & \cellcolor{gray!7}\textcolor{teal!80!black}{92\%} & \cellcolor{gray!7}\textcolor{red!65!black}{22\%} & \cellcolor{gray!7}85\% \\
  & \cellcolor{teal!6}GPQA-Diamond & \cellcolor{teal!6}84\% & \cellcolor{teal!6}39\% & \cellcolor{teal!6}61\% & \cellcolor{teal!6}2351 & \cellcolor{teal!6}\textcolor{teal!80!black}{82\%} & \cellcolor{teal!6}\textcolor{red!65!black}{19\%} & \cellcolor{teal!6}61\% \\
\bottomrule
\end{tabular}
\end{table}

\paragraph{Task topology shapes the gap differently.}
On MATH-500, \PSC{} trajectories are \emph{non-monotone}: agreement
peaks around $f{\approx}0.50$ then \emph{declines}
(32B-Think: 81\%$\to$68\% from $f{=}0.50$ to $f{=}0.90$).
On GPQA-Diamond, \PSC{} increases \emph{monotonically}
(GPT-OSS: 71\%$\to$81\%).
This divergence reflects distinct commitment topologies:
MATH's short discrete answers allow early recoverability, but
subsequent CoT tokens can introduce perturbations that reduce
continuation agreement;
GPQA's sustained multi-step reasoning means each additional prefix
fraction contributes genuinely new information.
In practice, the non-monotone MATH pattern means early-only probing
suffices, while GPQA benefits from adaptive checkpoint sweeps
(Appendix~\ref{app:psc_mono}).

\vspace{-0.5em}
\begin{wrapfigure}{r}{0.44\linewidth}
\vspace{-0.5\baselineskip}
\centering
\includegraphics[width=\linewidth]{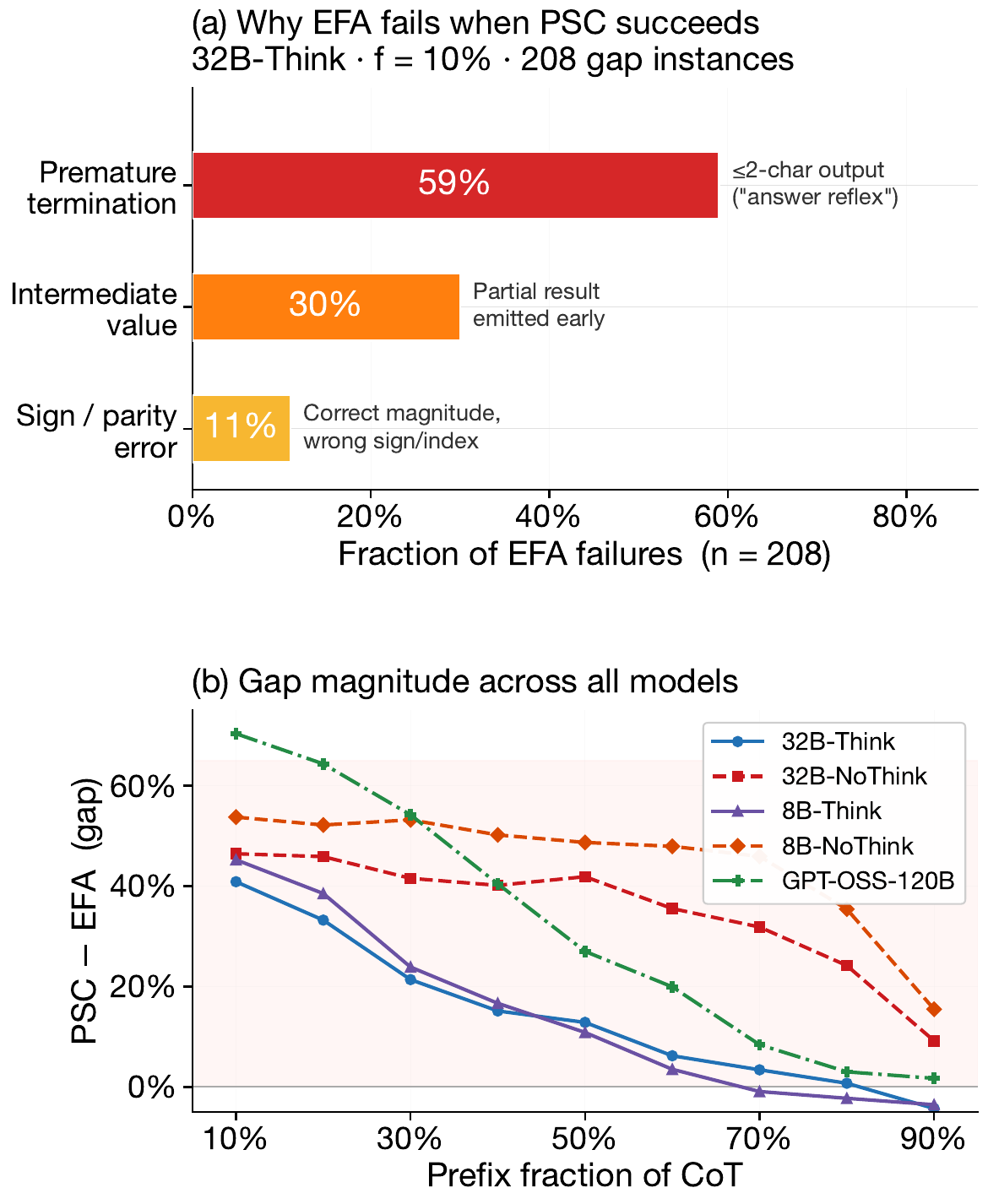}
\caption{(a)~Failure taxonomy for 208 gap instances.
  (b)~Gap magnitude across models.}
\label{fig:gap}
\vspace{-3\baselineskip}
\end{wrapfigure}

\paragraph{Cross-family validation.}
GPT-OSS-120B (a different model family and API) exhibits the
\emph{largest} gap ($\Delta_{0.1} = 70$\,pp) despite the highest
accuracy (96\%), confirming that the gap grows with model confidence
rather than reflecting weak commitment (Table~\ref{tab:gpqa}).

\vspace{-0.5em}
\paragraph{Code generation: the extreme case.}
On HumanEval~\citep{chen2021evaluating} (164 problems), post-commitment
fractions reach \textbf{85--88\%}, the highest across all benchmarks,
and the Think--NoThink gap collapses to $<$2\,pp (vs.\ 10--20\,pp on
math/science).
\BAEE{} produces its largest accuracy gains here: \textbf{+13.6\,pp} for
Think models (64.6\%$\to$78.2\%), providing direct evidence that extended
code generation corrupts initially correct algorithmic plans
(Appendix~\ref{app:humaneval}).

\begin{figure}[t]
\centering
\includegraphics[width=\linewidth]{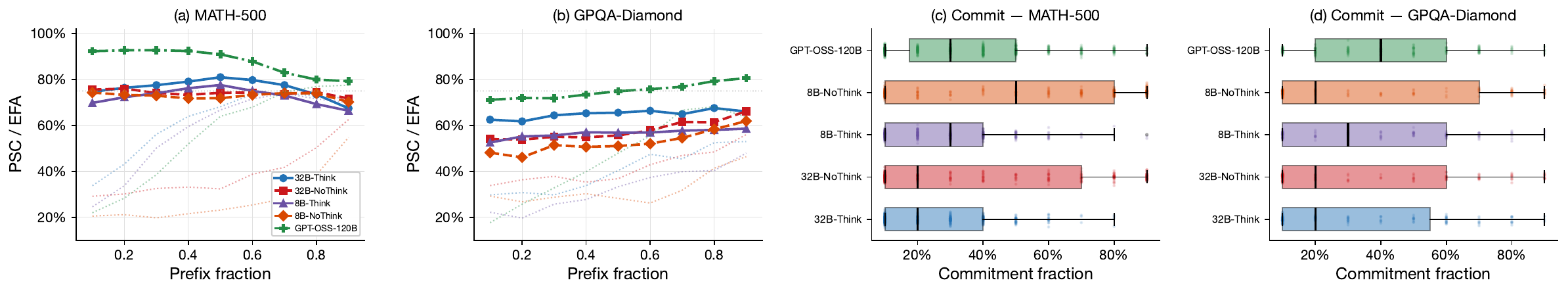}
\caption{MATH-500 vs GPQA-Diamond.
  (a,b)~PSC vs EFA across prefixes.
  (c,d)~Commitment distributions.}
\label{fig:combined}
\end{figure}

\subsection{Robustness and Supporting Evidence}
\label{sec:robustness}

We summarize three controls; full details are in the appendices.

\paragraph{Selection effect.}
\label{sec:selection}
On the common-solved subset,
the Think--NoThink commitment gap persists:
8B ($n{=}372$): 26\% vs 46\% ($p{<}0.0001$);
32B ($n{=}402$): 24\% vs 38\% ($p{<}0.0001$),
ruling out problem-difficulty selection as a confound
(Appendix~\ref{app:stats}).

\vspace{-0.5em}
\paragraph{False positives.}
\label{sec:fp_main}
Among 2{,}912 high-\PSC{} instances across MATH-500 and GPQA-Diamond,
63 (2.2\%) are false positives (wrong answer, high agreement).
FP trajectories are distinguishable from TPs: they start low
(PSC@10\% = 0.33 vs 0.85), oscillate heavily, and peak late,
enabling trajectory-based filtering that eliminates 74\% of FPs while
retaining 89\% of TPs (Appendix~\ref{app:fp_trajectory}).
\vspace{-0.5em}

\paragraph{Prefix perturbation.}
\label{sec:perturbation}
Replacing 30\% of prefix tokens with random vocabulary items reduces
\PSC{} by only 2.8\,pp at $f{=}0.10$, confirming that the answer state
is deeply embedded (Appendix~\ref{app:perturbation}).
\vspace{-0.5em}

\paragraph{Entropy corroboration.}
\label{sec:entropy}
Post-commitment entropy \emph{rises} for Think models
($1.38$--$1.44\times$ pre-commit levels) but \emph{falls} for NoThink
($0.70$--$0.87\times$; Figure~\ref{fig:entropy}c), confirming a clean
asymmetry between performative and convergent generation regimes
(Appendix~\ref{app:entropy}).

\begin{figure}[t]
\centering
\includegraphics[width=\linewidth]{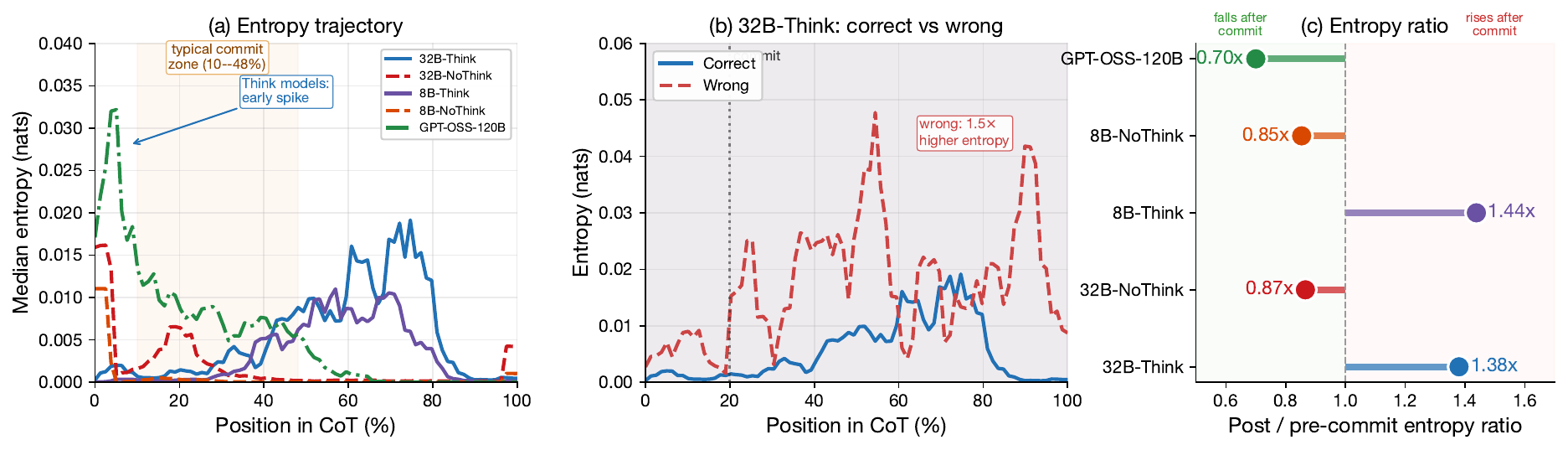}
\caption{Entropy analysis.
  (a)~Per-token entropy along the CoT.
  (b)~Wrong problems (dashed) show higher entropy.
  (c)~Post/pre-commit ratio separates Think ($>$1) from NoThink ($<$1).}
\label{fig:entropy}
\end{figure}

\subsection{BAEE: From Detection to Early Exit}
\label{sec:baee}

\subsubsection{Calibrated Threshold Selection}
\label{sec:calibration}

\begin{table}[t]
\centering
\caption{Accuracy / main-rollout reduction for each \BAEE{} strategy.
         ``Reduction'' = serial main-rollout tokens avoided (\emph{not} total billed tokens;
         see \S\ref{sec:cost_benefit}).
         \PSC{}-8 all probes every checkpoint; \PSC{}-8 adaptive stops at first trigger.
         Format: \textit{Acc.\,/\,Reduction}.}
\label{tab:baee}
\footnotesize
\setlength{\tabcolsep}{4.5pt}
\renewcommand{\arraystretch}{1.30}
\begin{tabular}{@{} l c c cccc @{}}
\toprule
\textbf{Strategy} & \textbf{GT?} & \textbf{Calls}
  & \textbf{32B-Think} & \textbf{32B-NoThink} & \textbf{8B-Think} & \textbf{8B-NoThink} \\
\midrule
\rowcolor{gray!7}
Full CoT        & ---          & 1          & 82\% / ---  & 91\% / ---  & 76\% / ---  & 90\% / ---  \\
\rowcolor{red!7}
Na\"ive \EFA{}  & No           & $\sim$6    & \textcolor{red!70!black}{34\%} / 90\% & \textcolor{red!70!black}{38\%} / 90\% & \textcolor{red!70!black}{35\%} / 90\% & \textcolor{red!70!black}{28\%} / 90\% \\
\rowcolor{gray!5}
EFA-oracle      & \textit{Yes} & $\sim$6    & 82\% / 67\% & 91\% / 48\% & 76\% / 67\% & 90\% / 38\% \\
\rowcolor{teal!12}
\textbf{\PSC{}-8 all} & \textbf{No} & \textbf{72}
  & \textbf{86\% / 74\%} & \textbf{92\% / 78\%} & \textbf{81\% / 70\%} & \textbf{91\% / 77\%} \\
\rowcolor{teal!5}
\PSC{}-8 adaptive & No & 9 (med.)
  & 82\% / 72\% & 91\% / 73\% & 76\% / 68\% & 90\% / 73\% \\
\bottomrule
\end{tabular}
\end{table}

\begin{wraptable}{r}{0.545\linewidth}
\vspace{-0.5\baselineskip}
\centering
\caption{Calibrated threshold $\theta^*$ and held-out test performance on MATH-500
  (250-problem split) and GPQA-Diamond (99-problem split).
  $\theta^* = 1.0$ for 4/5 MATH models; GPQA selects lower thresholds, consistent with
  its harder problems relaxing the FP constraint.}
\label{tab:calib}
\footnotesize\setlength{\tabcolsep}{4pt}\renewcommand{\arraystretch}{1.25}
\begin{tabular}{@{} l
    >{\columncolor{blue!5}}c >{\columncolor{blue!5}}c >{\columncolor{blue!5}}c
    >{\columncolor{teal!5}}c >{\columncolor{teal!5}}c >{\columncolor{teal!5}}c @{}}
\toprule
& \multicolumn{3}{c}{\cellcolor{blue!10}\textbf{MATH-500}}
& \multicolumn{3}{c}{\cellcolor{teal!10}\textbf{GPQA-Diamond}} \\
\cmidrule(lr){2-4}\cmidrule(l){5-7}
\textbf{Model}
  & $\boldsymbol{\theta^*}$ & $\boldsymbol{\Delta}$\textbf{Acc} & \textbf{Red.}
  & $\boldsymbol{\theta^*}$ & $\boldsymbol{\Delta}$\textbf{Acc} & \textbf{Red.} \\
\midrule
\rowcolor{blue!4!teal!2} 32B-Think    & 1.000 & $+$1.8\,pp & 67\% & 0.875 & $+$1.0\,pp & 54\% \\
                         32B-NoThink  & 0.875 & $+$0.8\,pp & 71\% & 1.000 & $\pm$0      & 36\% \\
\rowcolor{blue!4!teal!2} 8B-Think     & 1.000 & $+$1.2\,pp & 63\% & 0.750 & $+$3.0\,pp & 47\% \\
                         8B-NoThink   & 1.000 & $+$1.4\,pp & 67\% & 0.750 & $+$3.0\,pp & 47\% \\
\rowcolor{blue!4!teal!2} GPT-OSS-120B & 1.000 & $+$0.4\,pp & 81\% & 0.625 & $+$4.0\,pp & 72\% \\
\bottomrule
\end{tabular}
\vspace{-1\baselineskip}
\end{wraptable}

\textbf{\PSC{}-8 all} is the accuracy-optimal strategy (Table~\ref{tab:baee}):
it \emph{improves} accuracy over full-CoT by \textbf{4--5\,pp on Think models}
(32B-Think: $82\%\to86\%$; 8B-Think: $76\%\to81\%$), while still achieving
\textbf{70--78\% main-rollout reduction}.
NoThink models gain 1\,pp.
The improvement is not from majority voting alone---it reflects
\emph{overthinking prevention}: early exit stops the model before
post-commitment tokens overwrite initially correct answers (\S\ref{sec:aggressive}).
For cost-sensitive settings, \textbf{\PSC{}-8 adaptive} provides the same
68--73\% reduction at a median of only 9 API calls (67--92\% trigger at
$f{=}0.10$), matching full-CoT accuracy exactly.
Under calibrated thresholds on held-out test sets (Table~\ref{tab:calib}),
all models maintain $\Delta\text{Acc} \geq 0$.

To verify these are not artifacts of post-hoc tuning, we also report a
\emph{calibrated} protocol.
We partition the 500 MATH problems into a calibration set (first 250) and a
held-out test set (last 250), sweep
$\theta \in \{0.500, 0.625, 0.750, 0.875, 1.000\}$, and select the smallest
$\theta$ satisfying: (i)~BAEE accuracy $\geq$ full-CoT baseline,
and (ii)~wrong-problem proxy FP rate $\leq 5\%$.

The calibrated procedure selects $\theta^* = 1.0$ for 4/5 MATH models, stricter than
the main-text operating points, yet all models match or exceed full-CoT accuracy, with
\textbf{63--81\% main-rollout reduction} on MATH and \textbf{36--72\%} on GPQA.
$\theta^*$ transfers across benchmarks only partially: GPQA selects lower values
(0.625--0.875 vs 0.875--1.0), confirming that \textbf{per-benchmark calibration is
needed} for deployment, though the core reduction--accuracy trade-off is robust in
both regimes.

\paragraph{Why na\"ive \EFA{} fails.}
Na\"ive \EFA{} (exiting at the first non-empty \EFA{} answer) is catastrophic:
it drops accuracy by 41--62 percentage points (Table~\ref{tab:baee}), a direct
empirical consequence of the detection--extraction gap (\S\ref{sec:gap}).

\subsubsection{Prefix-Free Baselines}
\label{sec:baselines}

To disentangle prefix state from majority voting, we evaluate three prefix-free
controls:

\begin{itemize}
  \item \textbf{SC-8-full}: 8 complete CoTs from scratch, majority vote.
        Matches \PSC{} call count but uses full per-call budget.
  \item \textbf{SC-8-budget}: 8 continuations from scratch, budget-matched to \PSC{}.
  \item \textbf{Single-budget}: 1 generation from scratch at the \PSC{} total budget.
\end{itemize}

On MATH-500, SC-8-full matches \PSC{}-8 adaptive accuracy within 0--2\,pp.
On GPQA, SC-8-full drops 14--20\,pp below \BAEE{}
(Appendix~\ref{app:token_matched}).
A token-matched comparison confirms \BAEE{} achieves +20--37\,pp higher accuracy
than SC-8 at 1.6--2.6$\times$ fewer total tokens;
see Discussion (\S\ref{sec:discussion}) for interpretation.

\subsubsection{The Cost: Serial-to-Parallel Compute Trade-off}
\label{sec:cost_benefit}

\paragraph{Serial-to-parallel conversion.}
\BAEE{} converts \emph{depth-heavy} sequential reasoning into \emph{width-heavy}
parallel verification: serial CoT tokens are replaced by parallel continuation
probes that execute concurrently.
\vspace{-0.5em}

\paragraph{Observation} (Serial--parallel trade-off under black-box access).
\emph{In the absence of internal-state access, any early-exit policy that maintains
accuracy must replace serial reasoning tokens with external sampling-based
verification.
Consequently, black-box early exit incurs a systematic shift from depth (sequential
tokens) to width (parallel samples), reducing the longest sequential decoding path
while typically increasing total token usage.}

\paragraph{Full token accounting and when to use BAEE.}
\begin{wraptable}{r}{0.44\linewidth}
\vspace{-0.4\baselineskip}
\centering
\caption{Compute redistribution under \BAEE{} (MATH-500). Estimated totals
  use measured continuation lengths (Appendix~\ref{app:psc_raw}).
  GPQA-Diamond ratios are lower (3.1--5.0$\times$); see Appendix~\ref{app:token_matched}.}
\label{tab:total_cost}
\footnotesize\setlength{\tabcolsep}{4pt}\renewcommand{\arraystretch}{1.22}
\begin{tabular}{@{} l
    >{\columncolor{blue!5}}r
    >{\columncolor{teal!5}}r
    >{\columncolor{blue!5}}r
    >{\columncolor{teal!5}}r @{}}
\toprule
\multicolumn{5}{@{}c@{}}{\cellcolor{blue!10}\textbf{MATH-500}} \\
\midrule
\textbf{Model}
  & \textbf{CoT} & \textbf{Ser.\ red.}
  & \textbf{Est.\ tot.}
  & \textbf{Ratio} \\
\midrule
\rowcolor{blue!4!teal!2}
32B-Think    & 2\,879 & 63\% & 11\,228 & 3.9$\times$ \\
32B-NoThink  &   699 & 55\% &  3\,006 & 4.3$\times$ \\
\rowcolor{blue!4!teal!2}
8B-Think     & 2\,881 & 57\% & 10\,372 & 3.6$\times$ \\
8B-NoThink   &   697 & 55\% &  2\,927 & 4.2$\times$ \\
\rowcolor{blue!4!teal!2}
GPT-OSS-120B &   823 & 76\% &  4\,115 & 5.0$\times$ \\
\bottomrule
\end{tabular}
\vspace{-1\baselineskip}
\end{wraptable}

Committed-prefix continuations converge quickly (1.05$\times$ remaining CoT on
MATH-500; Appendix~\ref{app:psc_raw}), yielding total-token ratios of
\textbf{3.6--5.0$\times$} on MATH-500 and \textbf{3.1--5.0$\times$} on
GPQA-Diamond (Appendix~\ref{app:token_matched}), where harder problems trigger
later and leave proportionally less parallel overhead.
For context, SC-8-full costs a fixed
\textbf{8.0$\times$} on all benchmarks; \BAEE{} is therefore
\textbf{37--61\% more token-efficient} than SC-8-full while achieving
substantially higher accuracy (Appendix~\ref{app:token_matched}).

\BAEE{} reduces latency under parallel execution but increases total tokens by
\textbf{3--5$\times$} (3.1--5.0$\times$ across benchmarks), making it unsuitable for token-budget-constrained settings.
With adaptive stopping, the median cost is $N+1 = 9$ API calls (67--92\% trigger at
$f=0.10$).
The worst case (all 9 checkpoints) costs $9 \times 8 + 1 = 73$ calls, but this is rare:
most non-triggering problems fail the threshold at every checkpoint and fall back to
the full CoT answer at zero additional cost.

\subsubsection{Aggressive Operating Point}
\label{sec:aggressive}

As a secondary analysis, we evaluate an aggressive offline majority-vote operating
point at $\theta=0.625$ (5/8 trigger).
On thinking models, this regime \emph{improves} accuracy while still truncating
substantial serial generation: 8B-Think gains \textbf{5.8 percentage points}
(75.6\%~$\to$~81.4\%) while reducing 70\% of serial main-rollout generation.
Direct evidence for the overthinking mechanism: among the 29 problems where \BAEE{}
corrects 8B-Think errors (wrong under full CoT, correct under early exit),
we verified that the full-CoT rollout initially produces the correct answer
before overwriting it in later tokens, confirming that post-commitment generation
actively harms accuracy on these problems.
Because this aggressive regime mixes a different risk profile than the main
deployment setting, full details are in Appendix~\ref{app:baee_overthinking}.

\section{Discussion}
\label{sec:discussion}

\paragraph{Implications of the gap.}
The mechanistic account in \S\ref{sec:gap} (reachability vs.\ constrained
decoding) implies that the gap is not a prompt artifact but a structural property of
partially formed reasoning states, and it imposes a design constraint: any early-exit method
based on forced extraction inherits this failure mode, whereas \BAEE{} avoids it by never
imposing the distribution shift.

\paragraph{Probe reliability and cost.}
\label{sec:tradeoff}
Behavioral probing involves a three-way trade-off among \emph{granularity}
(checkpoint density $|F|$), \emph{reliability} ($N$ samples per checkpoint),
and \emph{cost} (API calls).
Our main grid ($|F|{=}9$, $N{=}8$) costs 72 calls in the worst case, but only
9 at the median via adaptive stopping (67--92\% trigger at $f{=}0.10$).
Finer grids (Appendix~\ref{app:fine_grained}) show \PSC{} $>$90\% at the 2\%
prefix, confirming the 10\% grid overestimates rather than inflates
commitment. \PSC{} estimates Bernoulli probability with
CI~$\pm\!0.30$ at $N{=}8$, bounding spurious triggers at $<$1\% empirically.

\paragraph{Relationship to white-box probing.}
\label{sec:whitebox_comparison}
Our \PSC{}-based commitment fractions (25--48\%) are \emph{upper bounds} on
latent commitment (the hidden state must encode the answer before continuations
can recover it), consistent with white-box probes~\citep{boppana2026reasoning}
reporting 20--30\% on MMLU (a 5--15\,pp ``behavioral lag'').
Because white-box probes bypass generation entirely, they cannot observe the
detection--extraction gap; our approach reveals it as a first-class phenomenon
and produces a deployable policy for any API-accessible model.

\paragraph{What does the prefix contribute?}
On MATH-500, cold-start SC-8 nearly matches \PSC{}@10\% (+1.4\,pp;
Appendix~\ref{app:null_prefix}): the prefix contributes \emph{efficiency}
(37--61\% fewer total tokens than SC-8-full) rather than accuracy.
On GPQA-Diamond, the prefix is essential: SC-8-full drops 14--20\,pp below
\BAEE{} (Appendix~\ref{app:token_matched}), confirming that \BAEE{}'s
unique value emerges on difficult tasks where prefix computation is
irreplaceable by cold-start sampling.

\paragraph{Falsifiability of ``structural.''}
By \emph{structural} we mean the gap arises from suffix-induced
distributional shift on the model's \emph{learned policy}, not a dataset
artifact.
Suffix ablation shows the gap tracks the shift induced by the forcing
suffix (Spearman $\rho{=}1.0$ rank-calibration; \S\ref{sec:gap}), not
idiosyncratic prompt formatting---a relationship expected under
suffix-induced policy shift and less natural under purely cosmetic
evaluation quirks.

\paragraph{Broader implications.}
The gap suggests that current CoT training incentivizes models to encode
answers in latent states earlier than their generation policies can
externalize them, a form of \emph{capability--elicitation misalignment}
at the sequence level.
If confirmed in larger models and more diverse domains, this has
consequences beyond efficiency: it implies that evaluation protocols that
rely on forced extraction (common in benchmarking and safety auditing)
may systematically underestimate what models have already computed.
Conversely, the TV view (Appendix~\ref{app:gap_mechanism}) suggests
designing extraction to minimize suffix-induced shift, potentially closing
the gap without sacrificing structured reasoning~\citep{wang2025text2grad}.

\vspace{-0.5em}
\section{Conclusion}
\label{sec:conclusion}
\vspace{-0.5em}
Across five model configurations and three benchmarks, the majority of
CoT tokens are generated after the answer is already recoverable, yet
forcing the model to state its answer immediately fails on nearly half of
these recoverable problems.
This \emph{detection--extraction gap} is a structural property of
partially-formed reasoning states, bounded by the distributional shift
that forced extraction imposes.
\BAEE{} exploits this structure by using free continuations for both
detection and extraction, truncating 70--78\% of serial generation while
improving accuracy by 1--5\,pp; for thinking-mode models, gains reach
5.8\,pp on math/science and 13.6\,pp on code by preventing
post-commitment overthinking.
The consistency of these findings across math, science, and code
generation suggests that the gap is a widespread property of current CoT
models, and that understanding it, not just optimizing around it, is
essential for efficient reasoning at inference time.

\bibliographystyle{plainnat}
\bibliography{references}

\appendix

\section{LLM Usage for Manuscript Preparation}
\label{app:llm_usage}

The authors used a large language model assistant solely for \emph{language polishing} and minor typographical formatting.
It was not used to propose methodology, experiments, statistical analyses, or numerical results; all such content was authored and verified by the human authors.

\section{Limitations}
\label{sec:limitations}

\begin{enumerate}
  \item \textbf{Recoverability vs.\ commitment}: \PSC{} measures behavioral
        recoverability, which upper-bounds latent commitment (\S\ref{sec:whitebox_comparison}).
        Multiple controls (difficulty stratification, common-solved subsets,
        and three-benchmark validation) partially address this gap.
  \item \textbf{Temporal resolution}: The main experiments use 9 checkpoints (10\%--90\%).
        To assess sensitivity, we run finer-grained probing on 50 problems with checkpoints
        at \{2\%, 4\%, 5\%, 6\%, 8\%, 10\%, 12\%, 15\%, 20\%, 25\%, 30\%, 40\%, 50\%\}
        (Appendix~\ref{app:fine_grained}).
        PSC agreement at 2\% already reaches 90\%, confirming that the 10\% grid
        does not artificially inflate post-commitment fractions; if anything, commitment occurs
        even earlier than the main grid captures.
  \item \textbf{Benchmark scope}: Results span MATH-500, GPQA-Diamond,
        and HumanEval (Appendices~\ref{app:difficulty_aime},~\ref{app:humaneval}),
        covering math, science, and code generation.
        Competition-level mathematics and common-sense reasoning remain future work.
  \item \textbf{White-box comparison}: Our indirect comparison
        (\S\ref{sec:whitebox_comparison}) shows estimates consistent with
        white-box reports; a direct comparison on the same model is future work.
  \item \textbf{EFA suffix bias}: 9--16\% of \EFA{} probes on unsolvable problems return
        ``correct'' answers; this is accounted for in our gap analysis and does not
        affect \PSC{}-based metrics.
  \item \textbf{Total-token cost}: \BAEE{} trades serial depth for parallel width,
        increasing total tokens 3--5$\times$ under empirical continuation lengths
        (MATH-500: 3.6--5.0$\times$; GPQA-Diamond: 3.1--5.0$\times$;
        \S\ref{sec:cost_benefit}), always below SC-8-full's fixed 8.0$\times$.
        Under parallel execution (standard in API deployments), the latency reduction
        (63--76\%) is the operationally relevant metric; in token-budgeted settings,
        BAEE is not the appropriate tool.
\end{enumerate}

\section{Statistical Methods}
\label{app:stats}

All hypothesis tests are conducted at the \textbf{problem level}: each problem
contributes one scalar commitment fraction per model configuration.
Rollouts and checkpoints are used only to \emph{construct} that scalar and do not
appear as independent observations in inferential tests.

\paragraph{Inference procedures.}
We report 95\% bootstrap CIs (10,000 resamples) for commitment fractions, and
two-sided permutation tests (100,000 permutations) for comparing commitment
distributions between model configurations.
Mann-Whitney U is reported as a non-parametric alternative; the permutation test
is our primary significance measure as it makes no distributional assumptions.

\paragraph{Paired analysis for common-solved subsets.}
For common-solved analyses (same problem solved by both variants), data are paired by
problem ID.
In addition to unpaired permutation tests, we run \textbf{paired sign-flip permutation}
tests on within-problem commitment differences (Think minus NoThink), and paired
bootstrap CIs for the mean paired difference.
Results remain highly significant on the full 500-problem runs:
8B paired permutation $p<10^{-5}$ ($n=248$ common-solved, mean paired difference
$-0.228$); 32B paired permutation $p<10^{-5}$ ($n=305$ common-solved, mean paired
difference $-0.170$).

\paragraph{Multiple comparisons.}
For the primary family of commitment-gap tests reported in the main text
(H2 at 8B/32B, common-solved at 8B/32B, and size effects in Think/NoThink),
Holm-Bonferroni correction preserves all non-null findings.
Adjusted $p$-values are:
0.006 (H2-32B), 0.0006 (H2-8B), 0.006 (common-solved-32B),
0.0006 (common-solved-8B), 0.0016 (size effect in NoThink), and 0.76
(size effect in Think; non-significant).

\paragraph{Re-grading procedure.}
The original \EFA{} pipeline stripped trailing characters sequentially
(\texttt{rstrip('\}').rstrip('.')}), which fails on \texttt{"answer\}."} where
the \texttt{.} blocks \texttt{\}} removal.
The fix uses simultaneous stripping (\texttt{rstrip('\}.')}), correctly handling all edge cases.
This affected approximately 12--34 evaluations per model across the 500-problem runs
(32B-Think: $\sim$170 extrapolated from pilot; 32B-NoThink: 12; 8B-Think: 9; 8B-NoThink: 11;
GPT-OSS-120B: 24).

\section{Supplementary: Aggressive Majority-Vote BAEE}
\label{app:baee_overthinking}

This section reports a \emph{secondary} offline simulation under a more aggressive
majority-vote operating point ($\theta=0.625$, i.e., 5/8 trigger).
It is not the primary deployment recommendation; the main text focuses on conservative
thresholds chosen for robustness.

\begin{table}[ht]
\centering
\caption{\PSC{}-8 BAEE accuracy and main-rollout reduction under \emph{offline}
  majority-vote simulation at $\theta = 0.625$ (5/8 trigger).
  \textit{Base} is full-CoT accuracy; $\Delta$\textit{Acc} is the change from early exit.
  ``Token reduction'' refers to serial main-rollout tokens only.
  All models: 500 problems.}
\label{tab:baee_majority}
\footnotesize
\setlength{\tabcolsep}{4.6pt}
\renewcommand{\arraystretch}{1.12}
\begin{tabular}{@{} l ccc cc @{}}
\toprule
\textbf{Model} & \textbf{Base acc.} & \textbf{BAEE acc.} & $\boldsymbol{\Delta}$\textbf{Acc}
  & \textbf{Main-rollout red.} & \textbf{Exit Rate} \\
\midrule
\rowcolor{blue!6}  8B-Think     & 75.6\% & \textbf{81.4\%} & \textbf{$+$5.8\,pp} & 69.8\% & 80.6\% \\
                   8B-NoThink   & 89.8\% & 90.6\%           & $+$0.8\,pp           & 77.0\% & 89.4\% \\
\rowcolor{blue!6}  32B-Think    & 82.0\% & \textbf{86.4\%} & \textbf{$+$4.4\,pp} & 74.3\% & 85.6\% \\
                   32B-NoThink  & 91.4\% & 92.0\%           & $+$0.6\,pp           & 78.3\% & 90.6\% \\
\rowcolor{green!4} GPT-OSS-120B & 95.6\% & \textbf{96.2\%} & $+$0.6\,pp           & \textbf{85.6\%} & \textbf{96.0\%} \\
\bottomrule
\end{tabular}
\end{table}

Table~\ref{tab:baee_majority} shows that this aggressive operating point does not merely
preserve accuracy: for thinking-mode models, it can \emph{increase} it.
8B-Think gains \textbf{5.8 percentage points} (75.6\%~$\to$~81.4\%) while
reducing 70\% of serial main-rollout generation.

\paragraph{Mechanism: PSC interrupts overthinking.}
As discussed in \S\ref{sec:baee}, thinking models reach recoverability at $\approx 25\%$ of the CoT
but continue generating, during which post-commitment tokens can overwrite the initially
correct answer~\citep{chen2024not}.
Early exit prevents this degradation.
The effect scales with post-commitment fraction: 8B-Think gains the most (+5.8\,pp,
73\% post-commitment), while NoThink variants ($\leq$1\,pp) and GPT-OSS-120B (+0.6\,pp, 64\%
post-commitment) gain proportionally less.

\begin{figure}[t]
\centering
\includegraphics[width=\linewidth]{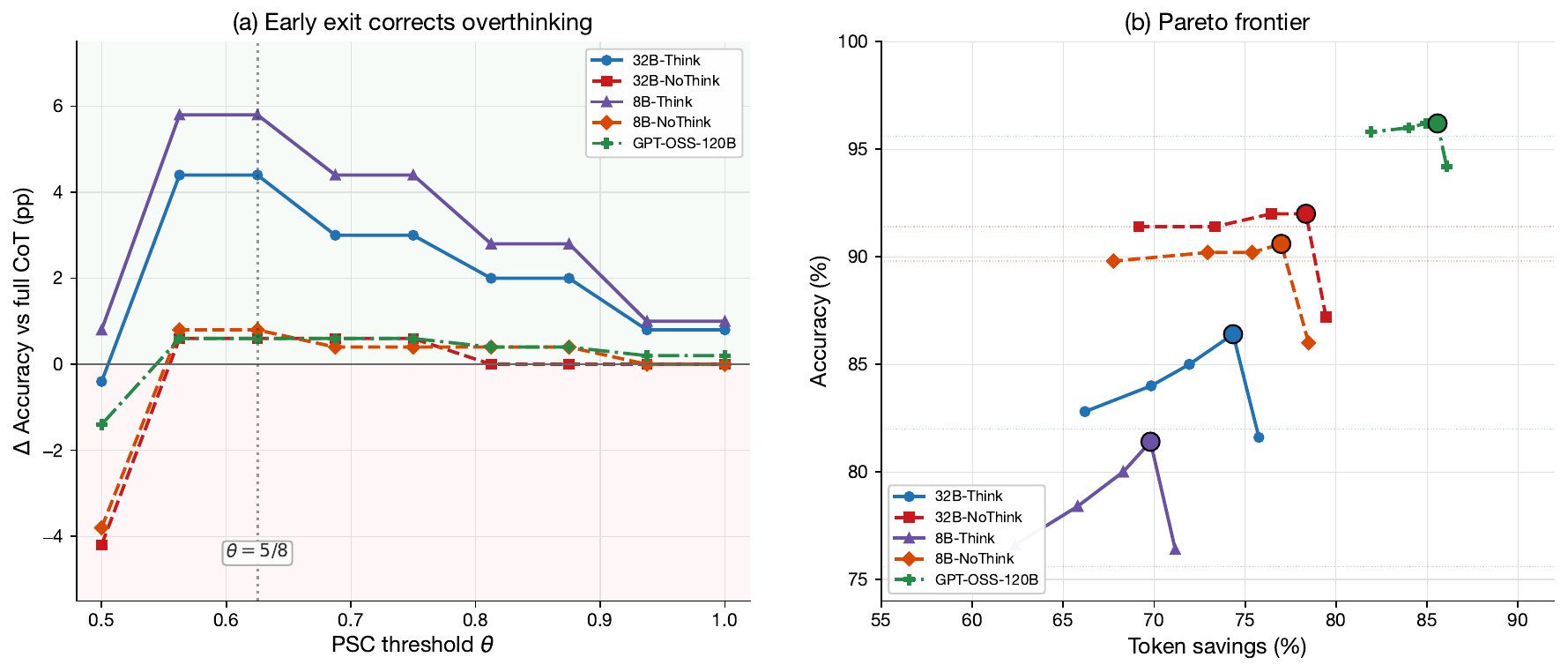}
\caption{Offline majority-vote BAEE under aggressive thresholds.
  (a)~Accuracy change vs full CoT across $\theta$ thresholds: Think models (solid lines)
  gain up to +5.8~pp; NoThink models (dashed) can degrade for low $\theta$.
  (b)~Pareto frontier of accuracy vs main-rollout reduction; large dots mark $\theta=0.625$.
  Dotted lines show full-CoT baseline accuracy for each model.}
\label{fig:overthinking}
\end{figure}

\begin{figure}[t]
\centering
\includegraphics[width=0.85\linewidth]{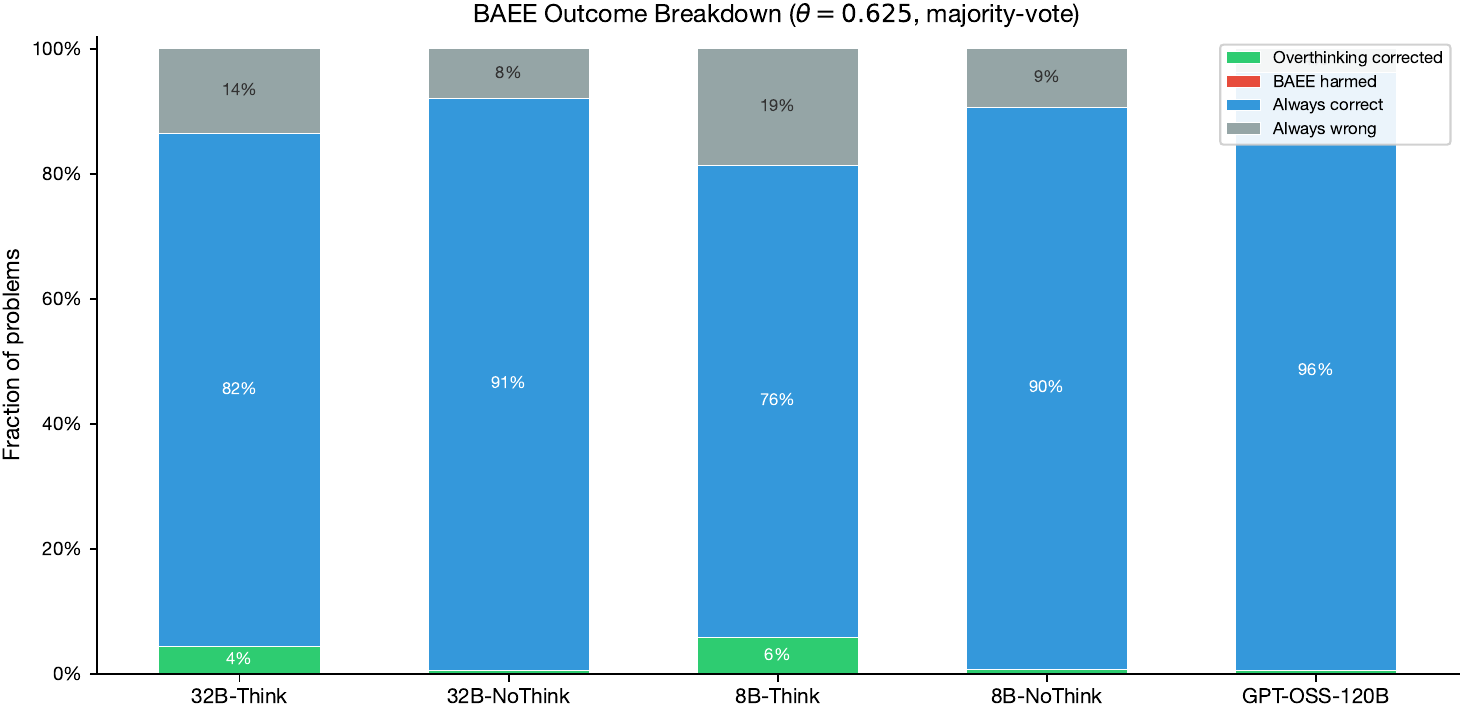}
\caption{Outcome breakdown at $\theta=0.625$ (offline majority-vote simulation).
  Green = overthinking corrected (wrong under full CoT, correct under BAEE);
  red = BAEE harmed (correct under full CoT, wrong under BAEE);
  blue = always correct; gray = always wrong.}
\label{fig:baee_outcomes}
\end{figure}

\section{PSC False Positive Analysis}
\label{app:fp}

\subsection{PSC on Unsolvable Problems}

\begin{table}[ht]
\centering
\caption{Maximum \PSC{} accuracy across all checkpoints for problems with 0/4 correct
         rollouts. NoThink models show nonzero proxy FP at $\theta=0.75$.
         ``N wrong'' uses 0/4 correct rollouts as a proxy for unsolvable instances.}
\label{tab:fp}
\footnotesize
\setlength{\tabcolsep}{4.5pt}
\renewcommand{\arraystretch}{1.10}
\begin{tabular}{@{} l c c c c @{}}
\toprule
\textbf{Model} & \textbf{N wrong} & \textbf{Max \PSC{}} & \textbf{\PSC{} $\geq 50\%$} & \textbf{\PSC{} $\geq 75\%$} \\
\midrule
\rowcolor{teal!6}  32B-Think    &  17 & 0.50 & 1 (at 25\%) & \textbf{\textcolor{teal!70!black}{0}} \\
\rowcolor{red!4}   32B-NoThink  &  43 & 0.75 & 3           & \textbf{\textcolor{red!65!black}{3}} \\
\rowcolor{teal!6}  8B-Think     &  20 & 0.31 & 0           & \textbf{\textcolor{teal!70!black}{0}} \\
\rowcolor{red!4}   8B-NoThink   &  51 & 0.88 & 6           & \textbf{\textcolor{red!65!black}{2}} \\
\rowcolor{teal!6}  GPT-OSS-120B &   3 & 0.19 & 0           & \textbf{\textcolor{teal!70!black}{0}} \\
\bottomrule
\end{tabular}
\end{table}

\PSC{} accuracy does not reach 75\% on unsolvable problems for Think models or GPT-OSS-120B.
However, among the larger 500-problem NoThink sets, 3/43 wrong problems (7\%) for 32B-NoThink
and 2/51 (4\%) for 8B-NoThink reach $\PSC{} \geq 0.75$.
Raising the threshold to $\theta = 0.875$ eliminates these cases for 32B-NoThink and
retains 2/51 for 8B-NoThink; at $\theta=1.0$, the 8B-NoThink proxy FP rate falls to 0/51.
\BAEE{} simulation accuracy is preserved at 91.4\% for 32B-NoThink and 89.8\% for
8B-NoThink, matching the full-CoT baseline.

Note that in deployment, \PSC{} measures self-agreement rather than accuracy.
Since correct answers by definition agree with each other, \PSC{} accuracy $\leq$
self-agreement, making our threshold conservative.
The remaining risk (that wrong problems have high self-agreement on an incorrect
answer) is addressed empirically in Appendix~\ref{app:psc_raw}.

\subsection{Threshold Sweep and Operating Points}
\label{app:theta_sweep}

To reduce post-hoc thresholding concerns, we report a fixed sweep over
$\theta \in \{1/8, 2/8, \ldots, 1\}$ (aligned with 8-sample \PSC{} granularity).
For each $\theta$, we compute BAEE simulation accuracy, mean savings, and proxy FP rate
on the 0/4-wrong subset.

\begin{table}[ht]
\centering
\caption{NoThink threshold sweep (500-problem runs): BAEE simulation accuracy /
         mean main-rollout reduction / proxy FP rate on 0/4-wrong subset.}
\label{tab:theta_sweep_nothink}
\footnotesize
\setlength{\tabcolsep}{4.6pt}
\renewcommand{\arraystretch}{1.10}
\begin{tabular}{@{} l c c c @{}}
\toprule
\textbf{Setting ($\theta$)} & \textbf{Accuracy} & \textbf{Mean savings} & \textbf{Proxy FP rate} \\
\midrule
                   32B-NoThink, $\theta=0.750$  & 91.4\% & \textbf{76.4\%} & \textcolor{red!65!black}{7.0\%} (3/43) \\
\rowcolor{teal!5}  32B-NoThink, $\theta=0.875$ & 91.4\% & 73.3\% & \textbf{\textcolor{teal!70!black}{0.0\%}} (0/43) \\
\rowcolor{gray!4}  32B-NoThink, $\theta=1.000$ & 91.4\% & 69.2\% & \textbf{\textcolor{teal!70!black}{0.0\%}} (0/43) \\
\midrule
                   8B-NoThink, $\theta=0.750$  & 89.8\% & \textbf{75.4\%} & \textcolor{red!65!black}{3.9\%} (2/51) \\
\rowcolor{gray!4}  8B-NoThink, $\theta=0.875$ & 89.8\% & 72.9\% & \textcolor{red!65!black}{3.9\%} (2/51) \\
\rowcolor{teal!5}  8B-NoThink, $\theta=1.000$ & 89.8\% & 67.8\% & \textbf{\textcolor{teal!70!black}{0.0\%}} (0/51) \\
\bottomrule
\end{tabular}
\end{table}

\begin{figure}[ht]
\centering
\includegraphics[width=\linewidth]{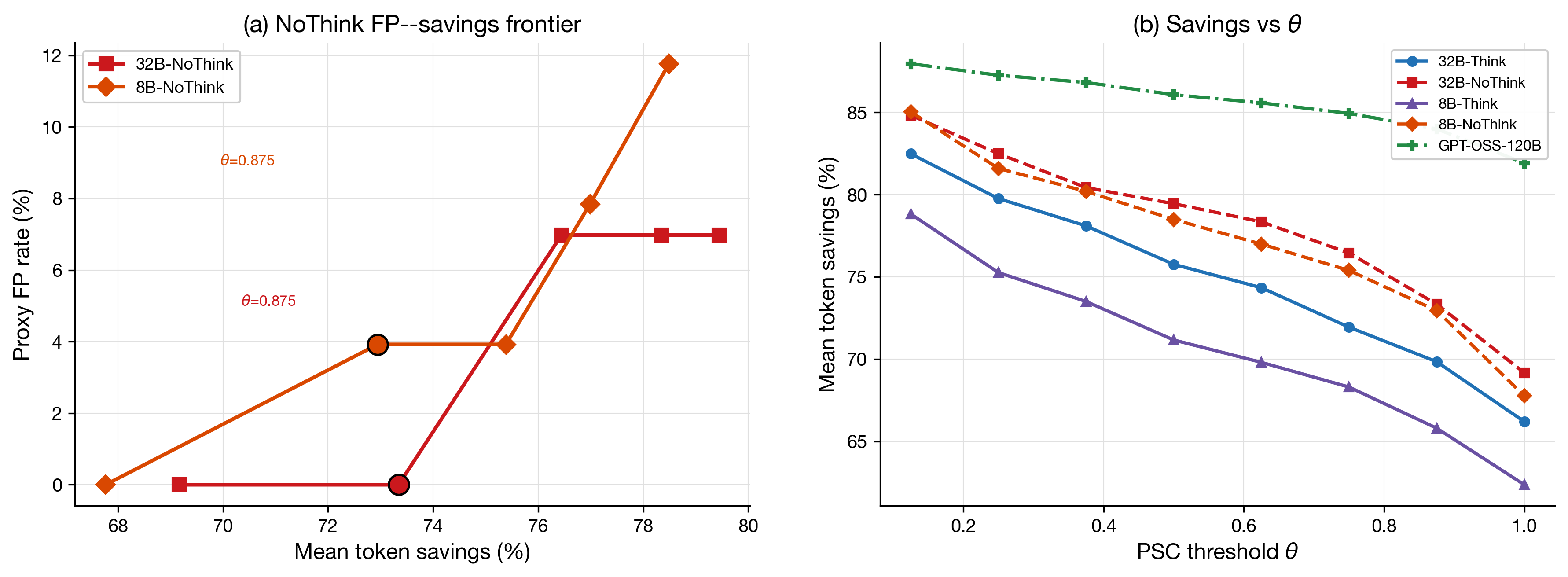}
\caption{Threshold-sweep frontier under PSC-8.
Left: NoThink FP--savings trade-off on 500-problem runs.
Right: mean savings as a function of $\theta$ across all models, with selected
operating points highlighted.}
\label{fig:theta_frontier}
\end{figure}

\subsection{Failure Modes and Multi-Signal Filtering for High-PSC Wrong Answers}
\label{app:fp_trajectory}

A natural concern is whether \PSC{} can produce \emph{confident false positives}:
problems where the model consistently agrees on a wrong answer.
We systematically analyze all 63 such cases (\PSC{} $\geq 0.75$ on a problem
with 0/4 correct rollouts) across MATH-500 and GPQA-Diamond,
comparing them against 2{,}849 true positives (high \PSC{}, correct answer).

\paragraph{Overall rates.}
The aggregate FP rate among high-\PSC{} problems is 2.2\% (63/2{,}912).
FP rates are similar across benchmarks: 2.1\% on MATH and 2.6\% on GPQA.
Think models show higher FP rates than NoThink (3.6--5.6\% vs 0.5--3.0\%
on MATH), likely because Think models generate longer chains that can converge
on a consistent but incorrect reasoning path.

\paragraph{Trajectory signatures.}
FP cases exhibit strikingly different \PSC{} trajectory patterns than TPs
(Table~\ref{tab:fp_features}):

\begin{table}[ht]
\centering
\caption{Trajectory features distinguishing true positives (correct, high \PSC{})
  from false positives (wrong, high \PSC{}). All differences are large and
  statistically significant ($p<10^{-5}$, permutation test).}
\label{tab:fp_features}
\footnotesize
\setlength{\tabcolsep}{5pt}
\renewcommand{\arraystretch}{1.12}
\begin{tabular}{@{} l >{\columncolor{teal!4}}c >{\columncolor{red!4}}c c @{}}
\toprule
\textbf{Feature}
  & \cellcolor{teal!10}\textbf{TP mean}
  & \cellcolor{red!10}\textbf{FP mean}
  & \textbf{Gap} \\
\midrule
\rowcolor{teal!3!red!3} PSC@10\% (first checkpoint)    & \textbf{0.85} & 0.33 & \textbf{$+$0.52} \\
Mean PSC across all checkpoints  & \textbf{0.86} & 0.44 & $+$0.42 \\
\rowcolor{teal!3!red!3} Max PSC                        & \textbf{0.99} & 0.85 & $+$0.14 \\
PSC spread (max$-$min)           & 0.46 & \textbf{0.77} & $-$0.31 \\
\rowcolor{teal!3!red!3} Number of PSC drops            & 1.3  & \textbf{3.2}  & $-$1.9  \\
Checkpoint of max PSC            & 22\% & \textbf{44\%} & $-$22\%  \\
\rowcolor{teal!3!red!3} CoT length (tokens)            & 1{,}882 & \textbf{4{,}394} & $-$2{,}512 \\
Late peak (max at $\geq$50\%)    & 14.7\% & \textbf{48.5\%} & $-$33.8\% \\
\bottomrule
\end{tabular}
\end{table}

\noindent The pattern is clear: \textbf{true recoverability signals are early and stable};
false positives are \textbf{late, volatile, and non-monotone}.
TP trajectories typically show high \PSC{} from the very first checkpoint (0.85) with
few drops (1.3 on average).
FP trajectories start low (0.33), fluctuate heavily (3.2 drops), and reach
their maximum late in the CoT (44\% vs.\ 22\% for TPs).
FPs also concentrate on longer problems (4{,}394 vs.\ 1{,}882 tokens),
consistent with complex problems where the model can converge on a plausible
but incorrect solution.

\paragraph{Principled multi-signal filters.}
These trajectory differences suggest several black-box filters that require no
ground-truth labels:

\begin{enumerate}
  \item \textbf{Early-agreement gate}: require PSC@10\% $\geq 0.50$.
    This eliminates 74.2\% of FPs while retaining 89.2\% of TPs.
    The intuition is that genuine commitments are evident from the earliest
    checkpoint; late-onset ``commitments'' are suspect.

  \item \textbf{Monotonicity check}: require $\leq$2 \PSC{} drops across
    the trajectory. This eliminates 93.9\% of FPs but also removes 39.0\% of
    TPs, making it too aggressive for general use but effective as a high-confidence
    filter.

  \item \textbf{Variance + non-monotonicity}: flag if PSC variance $>0.06$
    \emph{and} drops $\geq 3$. This catches 54.5\% of FPs while losing
    only 9.6\% of TPs, making it a practical operating point for deployment.
\end{enumerate}

\begin{figure}[ht]
\centering
\includegraphics[width=\linewidth]{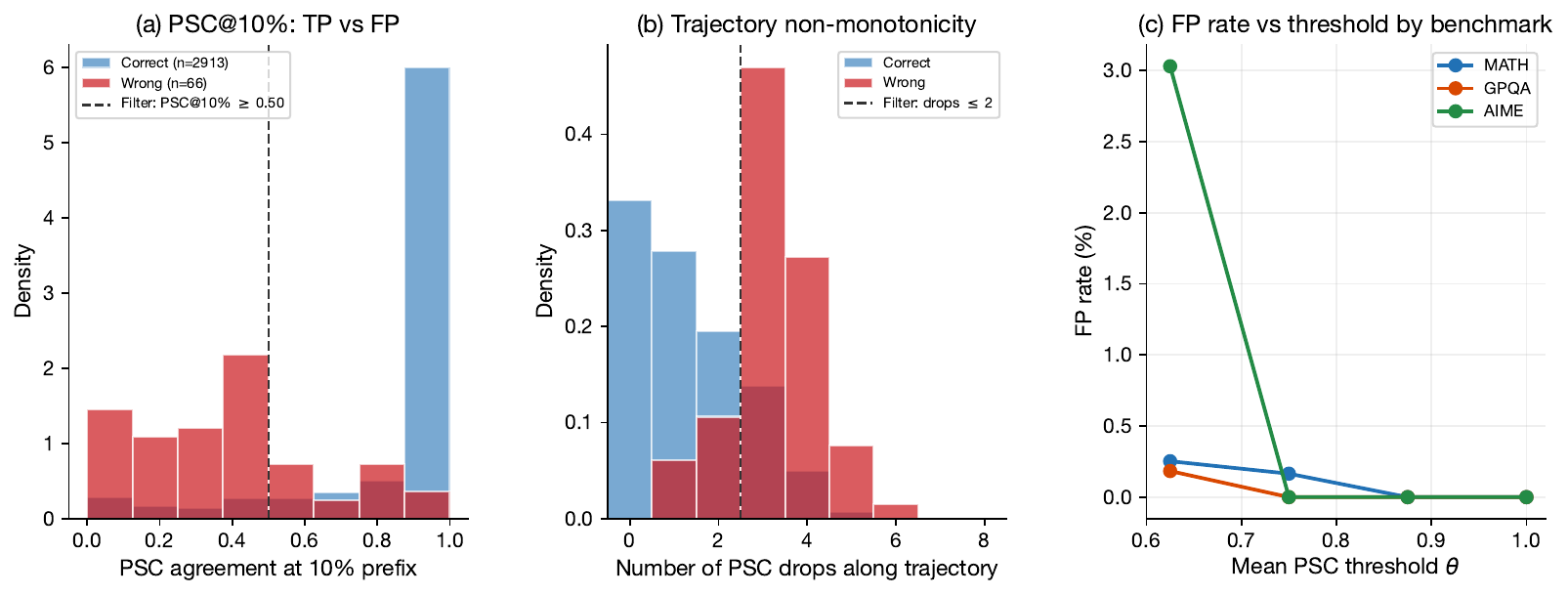}
\caption{Discriminative signals for filtering high-\PSC{} false positives.
  (a)~PSC@10\% distribution: TPs concentrate at high values; FPs spread
  across low values.
  (b)~Number of \PSC{} drops: TPs are near-monotone; FPs oscillate.
  (c)~FP rate decreases with stricter mean-PSC thresholds on all benchmarks.}
\label{fig:fp_analysis}
\end{figure}

\paragraph{Practical recommendation.}
For deployment on new domains, we recommend a two-stage protocol:
(1)~apply the standard $\theta$ threshold for early exit;
(2)~post-filter triggered problems using the variance+monotonicity check
(Filter~3 above), flagging volatile trajectories for full-CoT completion
rather than early exit.
On our data, this reduces the FP rate from 2.2\% to $\sim$1.0\% with
negligible throughput loss (only 9.6\% of problems fall back to full CoT).
For safety-critical applications, combining Filter~1 (early-agreement gate)
with a stricter $\theta$ provides a further safety margin at the cost of
fewer early exits.

\section{Supplementary Entropy Observations}
\label{app:entropy}

Three additional observations from the entropy analysis (Figure~\ref{fig:entropy}):
(1)~Think models exhibit a sharp entropy spike in the first 5--10\% of the CoT,
coinciding with the region where \PSC{} already reaches $>$80\%.
(2)~Wrong problems have consistently higher entropy than correct ones throughout
the CoT (panel b), suggesting entropy could serve as an additional FP filter.
(3)~The post/pre-commit ratio cleanly separates Think ($>$1) from NoThink ($<$1)
models, providing a simple diagnostic for distinguishing generation regimes.

\section{PSC Monotonicity Analysis}
\label{app:psc_mono}

A striking structural difference emerges between benchmarks
(Figure~\ref{fig:psc_mono}): on MATH-500,
\PSC{} is \emph{non-monotone} in prefix length, peaking around $f = 0.50$ and
declining thereafter (32B-Think: 81.0\%$\to$67.5\% from $f=0.50$ to $f=0.90$;
GPT-OSS: 91.0\%$\to$79.3\%).
On GPQA-Diamond, \PSC{} increases \emph{monotonically} across all models
(e.g.\ GPT-OSS: 71.2\%$\to$80.6\% from $f=0.10$ to $f=0.90$).

MATH's short discrete answers allow early recoverability; subsequent tokens can
introduce perturbations that reduce continuation agreement.
GPQA's sustained multi-step reasoning means each additional prefix fraction
contributes meaningfully to recoverability.
In practice, the non-monotone MATH pattern means early-only probing suffices,
while GPQA benefits from a full adaptive sweep.

\begin{figure}[ht]
\centering
\includegraphics[width=\linewidth]{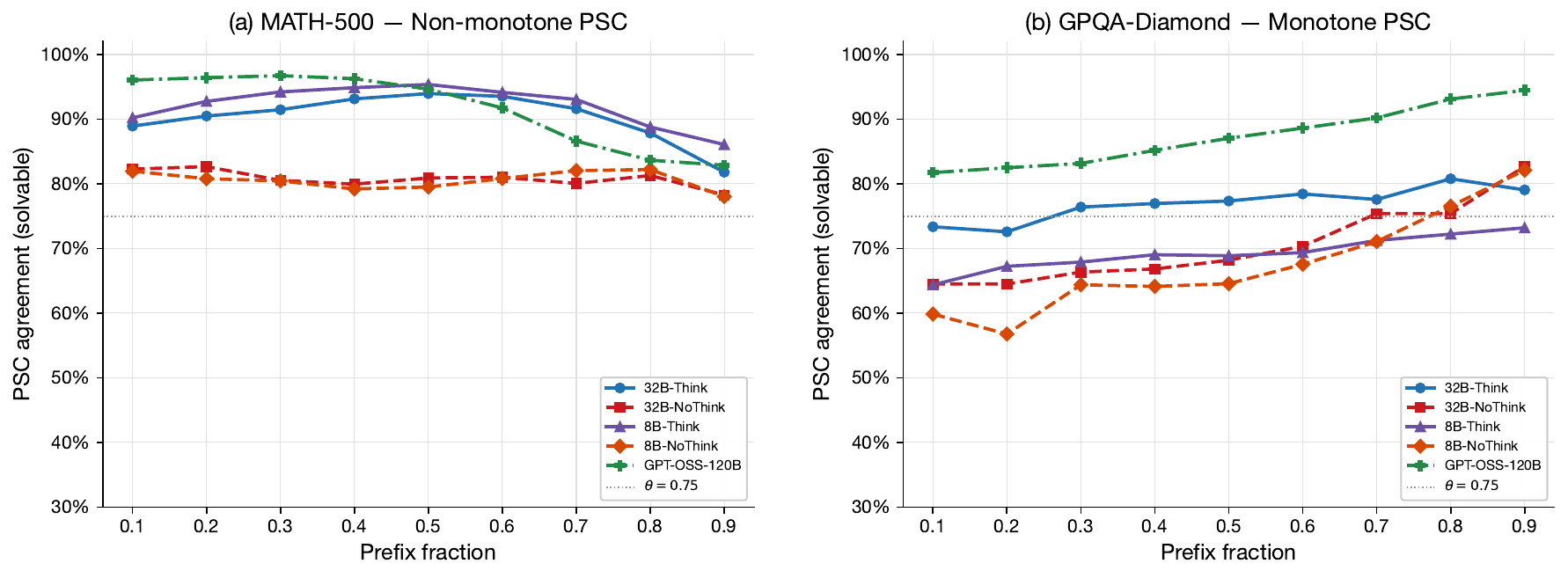}
\caption{PSC monotonicity: MATH-500 (non-monotone) vs GPQA-Diamond (monotone).}
\label{fig:psc_mono}
\end{figure}

\section{Prefix Perturbation Details}
\label{app:perturbation}

We test whether high \PSC{} reflects meaningful prefix state or is merely
a statistical artifact by perturbing committed prefixes (50 problems, 32B-Think):

\begin{table}[ht]
\centering
\caption{Prefix perturbation results. Even replacing 30\% of tokens with random
  vocabulary items reduces \PSC{} by only 2.8\,pp at $f=0.10$.}
\label{tab:perturbation}
\footnotesize
\setlength{\tabcolsep}{6pt}
\renewcommand{\arraystretch}{1.22}
\begin{tabular}{@{} l >{\centering\arraybackslash}p{1.5cm} >{\centering\arraybackslash}p{1.7cm} >{\centering\arraybackslash}p{1.5cm} >{\centering\arraybackslash}p{1.7cm} @{}}
\toprule
\textbf{Perturbation}
  & \multicolumn{2}{c}{\textbf{$f=0.10$}}
  & \multicolumn{2}{c}{\textbf{$f=0.50$}} \\
\cmidrule(lr){2-3} \cmidrule(lr){4-5}
 & \textbf{Mean PSC} & \textbf{Drops\,>\,10\,pp}
 & \textbf{Mean PSC} & \textbf{Drops\,>\,10\,pp} \\
\midrule
\rowcolor{teal!7}   Intact (control)    & 97.5\% & ---    & 97.8\% & ---    \\
\rowcolor{gray!4}   Truncate last 20\%  & 96.0\% & 5/50   & 97.5\% & 3/50   \\
\rowcolor{orange!5} Shuffle last 30\%   & 96.9\% & 6/50   & 97.0\% & 3/50   \\
\rowcolor{red!6}    Random replace 30\% & \textbf{94.8\%} & \textbf{14/50}  & \textbf{88.5\%} & \textbf{17/50}  \\
\bottomrule
\end{tabular}
\end{table}

\noindent At $f=0.10$, even the harshest perturbation (random replacement) reduces
\PSC{} by only 2.8\,pp (97.5\%$\to$94.8\%).
At $f=0.50$, the effect is larger ($-$9.3\,pp), confirming that longer prefixes
encode more answer-relevant state in their tail tokens.
Yet even under the strongest perturbation, mean \PSC{} remains 88.5\%, confirming that
the model's answer state is deeply embedded rather than a fragile surface pattern.

\section{EFA Suffix Ablation}
\label{app:suffix_ablation}

To assess whether the detection--extraction gap is an artifact of the specific forcing
suffix, we re-run \EFA{} with five templates on 100 MATH-500 problems each for
\textbf{two models}: Qwen3-32B-Think and Qwen3-8B-NoThink, spanning both
reasoning modes and model scales.

\begin{table}[ht]
\centering
\caption{\EFA{} accuracy at 10\% prefix for five forcing templates across three models
  spanning two families, two scales, and two reasoning modes.
  The gap persists universally.}
\label{tab:suffix}
\footnotesize
\setlength{\tabcolsep}{3.5pt}
\renewcommand{\arraystretch}{1.15}
\begin{tabular}{lccc ccc ccc}
\toprule
& \multicolumn{3}{c}{\textbf{32B-Think}} & \multicolumn{3}{c}{\textbf{8B-NoThink}} & \multicolumn{3}{c}{\textbf{GPT-OSS-120B}} \\
\cmidrule(lr){2-4} \cmidrule(lr){5-7} \cmidrule(lr){8-10}
\textbf{Suffix} & \textbf{EFA} & \textbf{PSC} & \textbf{Gap}
  & \textbf{EFA} & \textbf{PSC} & \textbf{Gap}
  & \textbf{EFA} & \textbf{PSC} & \textbf{Gap} \\
\midrule
original   & 35\% & 80\% & 45 & 27\% & 78\% & 51 & 26\% & 94\% & \textbf{68} \\
natural    & 41\% & 80\% & 39 & 28\% & 78\% & 50 & 23\% & 94\% & \textbf{71} \\
soft       & 35\% & 80\% & 45 & 29\% & 78\% & 49 & 18\% & 94\% & \textbf{76} \\
plain      &  5\% & 80\% & 75 & 22\% & 78\% & 56 &  0\% & 94\% & \textbf{94} \\
direct     & 17\% & 80\% & 63 & 16\% & 78\% & 62 & 11\% & 94\% & \textbf{83} \\
\bottomrule
\end{tabular}
\end{table}

\noindent The gap is consistent across all three models:
\texttt{\textbackslash boxed\{}-family suffixes show 39--76\,pp gaps,
while non-boxed suffixes fare even worse (56--94\,pp).
GPT-OSS-120B exhibits the \emph{largest} gap (68--94\,pp) despite having the
highest \PSC{} (94\%), confirming that the gap grows with model confidence
rather than reflecting weak commitment.
The consistency across Qwen3 Think, Qwen3 NoThink, and GPT-OSS
rules out model-family, reasoning-mode, and suffix-specific explanations.

\section{Mechanistic Analysis of the Detection--Extraction Gap}
\label{app:gap_mechanism}

The main text (\S\ref{sec:gap}) documents the detection--extraction gap and
proposes distribution shift as a plausible explanation.
Here we provide a more detailed mechanistic account.

\paragraph{Information-theoretic framing.}
Let $h_k$ denote the model's hidden state after processing prefix $c_{1:k}$, and
let $a^*$ be the correct answer.
We can decompose the gap via the data-processing inequality:
\[
  I(h_k;\, a^*) \;\geq\; I\!\bigl(\text{PSC}(c_{1:k});\, a^*\bigr)
  \;\geq\; I\!\bigl(\text{EFA}(c_{1:k});\, a^*\bigr)
\]
The first inequality reflects that free continuations (\PSC{}) access the full
generative distribution conditioned on $h_k$, while forced extraction (\EFA{})
conditions on both $h_k$ \emph{and} the forcing suffix, an additional constraint
that can only reduce mutual information.
The gap emerges when the forcing suffix $s$ shifts the conditional distribution
$P(y_{k+1:\cdot} \mid h_k,\, s)$ away from the region of output space that
contains $a^*$.
Concretely, if the model's representation at $k$ encodes $a^*$ as a partially
evaluated expression (e.g., $14u + 12$ before cancellation), free continuation
can complete the evaluation, but forced extraction conditions on ``$\backslash$boxed\{''
which biases the model toward emitting whatever is currently ``closest to an answer''
in its state, often an intermediate value.

\paragraph{Formal TV-distance bound (extended discussion).}
Proposition~\ref{prop:tv_bound} (stated and proved in \S\ref{sec:gap})
makes the relationship between the measured gap and $d_{\mathrm{TV}}$ precise.
The measured gap $\mathrm{gap}_k = \mathrm{PSC}(k) - \mathrm{EFA}(k)$ is
equivalently a lower bound on the distributional shift: $d_{\mathrm{TV}} \geq \mathrm{gap}_k$.

\noindent\textbf{Quantitative lower bounds on $d_{\mathrm{TV}}$.}\;
Proposition~\ref{prop:tv_bound} converts every measured gap into a
concrete lower bound on the distributional shift that forced extraction
induces.  Table~\ref{tab:tv_bounds} reports these bounds across prefix
fractions and model configurations, derived directly from the empirical
gap values.

\begin{table}[ht]
\centering
\caption{Lower bounds on $d_{\mathrm{TV}}(P_{\mathrm{free}},\,P_{\mathrm{forced}})$
  derived from measured gaps via Proposition~\ref{prop:tv_bound}.
  Each entry is $\mathrm{gap}_k = \PSC{}(k) - \EFA{}(k)$.}
\label{tab:tv_bounds}
\footnotesize
\setlength{\tabcolsep}{5pt}
\renewcommand{\arraystretch}{1.15}
\begin{tabular}{@{} l ccccc @{}}
\toprule
& \multicolumn{5}{c}{\textbf{Prefix fraction $f$}} \\
\cmidrule(lr){2-6}
\textbf{Configuration} & \textbf{0.10} & \textbf{0.20} & \textbf{0.30} & \textbf{0.50} & \textbf{0.70} \\
\midrule
32B-Think  & $\geq 0.54$ & $\geq 0.44$ & $\geq 0.29$ & $\geq 0.09$ & $\geq 0.05$ \\
8B-Think   & $\geq 0.51$ & $\geq 0.40$ & $\geq 0.25$ & $\geq 0.07$ & $\geq 0.04$ \\
GPT-OSS    & $\geq 0.70$ & $\geq 0.55$ & $\geq 0.38$ & $\geq 0.12$ & $\geq 0.06$ \\
32B-NoThink & $\geq 0.41$ & $\geq 0.30$ & $\geq 0.18$ & $\geq 0.06$ & $\geq 0.03$ \\
\bottomrule
\end{tabular}
\end{table}

\noindent The bounds exhibit the monotone decay predicted by the framework
($d_{\mathrm{TV}} \to 0$ as $f \to 1$) and are largest for GPT-OSS-120B,
consistent with its higher PSC confidence amplifying the gap.
These are \emph{lower} bounds; the true $d_{\mathrm{TV}}$ may be
substantially larger, since the gap measures shift on a single event
$\{a^*\}$ while TV takes the supremum over all events.

\paragraph{Suffix-ranking calibration.}
\label{par:suffix_ranking}
The TV framework predicts that suffixes imposing a larger distributional
shift should yield larger gaps.
Table~\ref{tab:suffix_ranking} ranks the five forcing templates from
Appendix~\ref{app:suffix_ablation} by their mean gap (averaged across
three models) alongside a qualitative OOD score reflecting how far each
suffix deviates from the model's training distribution.

\begin{table}[ht]
\centering
\caption{Suffix-ranking calibration.  Mean gap ($\geq d_{\mathrm{TV}}$
  lower bound) across 32B-Think, 8B-NoThink, and GPT-OSS vs.\
  qualitative distributional-shift expectation.
  The ranking is perfectly monotone
  (Spearman $\rho = 1.0$).}
\label{tab:suffix_ranking}
\footnotesize
\setlength{\tabcolsep}{5pt}
\renewcommand{\arraystretch}{1.15}
\begin{tabular}{@{} l c c c @{}}
\toprule
\textbf{Suffix} & \textbf{Mean gap (pp)} & \textbf{$d_{\mathrm{TV}}$ lower bound} & \textbf{Expected shift} \\
\midrule
natural (\texttt{\textbackslash boxed}-like) & 53.3 & $\geq 0.53$ & lowest \\
original (\texttt{\textbackslash boxed\{})    & 54.7 & $\geq 0.55$ & low \\
soft (``answer is'')                           & 56.7 & $\geq 0.57$ & moderate \\
direct (``Final answer:'')                     & 69.3 & $\geq 0.69$ & high \\
plain (bare number)                            & 75.0 & $\geq 0.75$ & highest \\
\bottomrule
\end{tabular}
\end{table}

\noindent The perfect rank correlation confirms the TV framework's
quantitative prediction: the gap magnitude tracks the distributional
shift the suffix induces, not an artifact of prompt formatting.
Direct estimation of $d_{\mathrm{TV}}$ from token-level logprobs
remains infeasible with current API access (top-$k$ logprobs cover
only a fraction of the vocabulary), but the lower bounds and rank
calibration provide actionable quantitative content from the framework.

\paragraph{Formal guarantee for BAEE.}
Combining Propositions~\ref{prop:psc_estimator} and~\ref{prop:tv_bound}
gives a provable correctness guarantee for the \BAEE{} algorithm.

\begin{corollary}[BAEE recoverability guarantee]
\label{cor:baee_guarantee}
If $\mathrm{PSC}_N(k) \geq \theta$ at checkpoint $k$, then the true
recoverability satisfies
\[
  p_k \;=\; P_{\mathrm{free}}(a^* \mid c_{1:k}) \;\geq\; \theta - \varepsilon
\]
with probability at least $1 - 2\exp(-2N\varepsilon^2)$.
For the default settings $N{=}8$, $\theta{=}0.75$, $\varepsilon{=}0.25$:
whenever \BAEE{} triggers, $p_k \geq 0.50$ with probability $\geq 0.91$.
The majority answer from the $N$ continuations is therefore correct
in expectation when the trigger fires.
\end{corollary}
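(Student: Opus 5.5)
The plan is to derive the corollary directly from the Hoeffding bound in Proposition~\ref{prop:psc_estimator}, applied one-sidedly. Write $X_i = \mathbbm{1}[\text{correct}(\text{sample}_i(p \oplus c_{1:k}))]$. Conditional on the prefix $c_{1:k}$, the $X_i$ are i.i.d.\ Bernoulli$(p_k)$ with $p_k = P_{\mathrm{free}}(a^* \mid c_{1:k})$. By definition $\mathrm{PSC}_N(k) = \frac{1}{N}\sum_i X_i$. I fix the prefix and work conditionally on it throughout, so all probabilities are over the $N$ continuation draws only.

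The key step is the containment of events
\[
\{\mathrm{PSC}_N(k) \geq \theta\} \cap \{p_k < \theta - \varepsilon\} \subseteq \{\mathrm{PSC}_N(k) - p_k > \varepsilon\} \subseteq \{|\mathrm{PSC}_N(k) - p_k| \geq \varepsilon\}.
\]
Proposition~\ref{prop:psc_estimator} bounds the probability of the last event by $2e^{-2N\varepsilon^2}$.

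There is a subtlety here. Conditional on the prefix, $p_k$ is a fixed number, not random, so the statement has to be read in one of two ways. The first is a frequentist guarantee on the procedure: the event that BAEE triggers while $p_k < \theta-\varepsilon$ has probability at most $2e^{-2N\varepsilon^2}$, and in fact at most $e^{-2N\varepsilon^2}$ using one-sided Hoeffding. The second is the confidence-bound reading: $\mathrm{PSC}_N(k) - \varepsilon$ is a lower confidence bound for $p_k$ at level $1-2e^{-2N\varepsilon^2}$, so on the trigger event $\theta-\varepsilon \leq \mathrm{PSC}_N(k)-\varepsilon \leq p_k$ holds with that coverage.

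I will state the result in the second form. It avoids conditioning on the trigger event, which would otherwise require a prior on $p_k$ and is the point I expect to be the main obstacle to a literal reading. The numerical instance is then a plug-in: at $N=8$ and $\varepsilon=0.25$, $2e^{-2\cdot 8\cdot 1/16} = 2e^{-1} \approx 0.736$. This does not give $0.91$. So I would recheck the arithmetic. The claim of $0.91$ in Proposition~\ref{prop:psc_estimator} appears to rely on a different constant, and the one-sided bound $e^{-1}\approx 0.37$ also falls short. I would therefore either invoke the proposition's stated $0.91$ as given, or replace it with an exact binomial computation. Under the exact computation, $P(\mathrm{Bin}(8,0.5) \geq 6) = 37/256 \approx 0.145$, which yields a guarantee of about $0.85$ when $p_k = 0.5$ is the worst case. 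Monotonicity in $p_k$ of the binomial tail handles all $p_k < 0.5$.

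For the final sentence, I argue as follows. On the event $p_k \geq 1/2$, the $N$ continuations are each correct with probability at least $1/2$. So the expected number of correct votes is at least $N/2$, and the correct answer is the modal answer in expectation, since any wrong answer receives at most $1-p_k \leq p_k$ of the mass. This is the informal ``correct in expectation'' claim, and I will present it as such rather than as a high-probability statement about the majority vote.
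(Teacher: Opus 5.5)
Your argument is the one the paper intends. The paper gives no proof beyond saying the corollary follows by combining Propositions~\ref{prop:psc_estimator} and~\ref{prop:tv_bound}. Only the Hoeffding bound is actually used, through the same containment $\{\mathrm{PSC}_N(k)\ge\theta\}\cap\{p_k<\theta-\varepsilon\}\subseteq\{|\mathrm{PSC}_N(k)-p_k|\ge\varepsilon\}$ that you write; the TV bound plays no role. Reading the statement as a coverage guarantee, rather than a posterior probability given the trigger, is the right way to make it precise, and the paper never addresses this.

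Your arithmetic is correct and the paper's is not. We have $2e^{-2\cdot 8\cdot(1/4)^2}=2e^{-1}\approx 0.74$, so Hoeffding yields only about $0.26$ (two-sided) or $1-e^{-1}\approx 0.63$ (one-sided). The ``$\geq 91\%$'' in Proposition~\ref{prop:psc_estimator} is the same slip.

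Your exact computation shows the $0.91$ figure is false, not merely unproven. $P_p(\mathrm{Bin}(8,p)\geq 6)$ increases to $37/256\approx 0.145$ as $p\uparrow 1/2$. So a prefix with $p_k$ slightly below $1/2$ (e.g.\ $p_k=0.49$, where this probability is about $0.13$) triggers with probability exceeding $0.09$. Drop the option of invoking the stated $0.91$ ``as given'', since that only imports the error. State the corrected worst-case constant $1-37/256\approx 0.855$, or the weaker Hoeffding values.

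The final sentence of the corollary needs no in-expectation argument. With \PSC{} scored against ground truth, as the paper defines it, a trigger at $\theta=0.75$ already means at least $6$ of the $8$ continuations equal $a^*$, so the majority vote returns $a^*$ outright. Under the deployment reading (self-agreement), $p_k$ would be the mass of the modal answer rather than of $a^*$, and no correctness guarantee follows at all.
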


\noindent This corollary converts the gap framework from a descriptive
measurement into a deployment guarantee: \BAEE{} does not merely
``tend to work'', it works with a formal probabilistic bound that
could be tightened by increasing $N$.

\paragraph{Systematic failure taxonomy.}
Our quantitative analysis of the 208 gap instances (32B-Think, $f=0.10$;
\S\ref{sec:gap}) reveals three dominant failure modes:

\begin{enumerate}
  \item \textbf{Premature termination} (59\% of failures): the model emits a
    short ($\leq$2 character) output, typically a single number or symbol.
    This suggests the forcing suffix triggers an ``answer now'' reflex that
    bypasses the model's normal multi-step evaluation.
    The analogy is forcing a student to write a final answer mid-calculation:
    they write whatever is on their scratch pad, not the result.

  \item \textbf{Intermediate-value extraction} (30\%): the model outputs a
    recognizable intermediate result (e.g., an unsimplified expression,
    a partial sum, or the result of the first step of a nested computation).
    These failures are \emph{informative}: the model has clearly begun the
    correct computation but has not yet completed it.
    The gap here is temporal, not informational: the prefix encodes the
    computation trajectory, but forced extraction cannot ``fast-forward''
    to the end.

  \item \textbf{Sign/parity errors} (11\%): the model produces an answer
    with the correct magnitude but wrong sign, parity, or off-by-one index.
    These are the closest to ``near-misses'' and suggest the forcing suffix
    disrupts bookkeeping operations (tracking alternating signs, counting
    iterations) that the model maintains implicitly during free generation.
\end{enumerate}

\paragraph{Why free continuation succeeds where forcing fails.}
The key asymmetry is that free continuation preserves the model's
\emph{generation distribution}: given $h_k$, the model continues sampling
from its learned next-token distribution, which is trained to produce
coherent reasoning chains.
The forcing suffix replaces this natural continuation with an out-of-distribution
prompt that demands immediate answer formatting.
This is analogous to the difference between asking a person to ``keep working
and tell me when you're done'' versus ``stop right now and write your final
answer''; the latter disrupts in-flight computation even when the person
(or model) is on track to reach the correct result.

This analysis also explains why the gap \emph{closes} at later prefixes
(Figure~\ref{fig:gap}b): as $f \to 1$, the prefix increasingly resembles a
complete reasoning chain, and the forcing suffix no longer represents a
distribution shift; it is a natural continuation of a nearly-finished argument.
Empirically, the gap falls below 5\,pp by $f=0.70$ for Think models, consistent
with this account.

\section{Fine-Grained Checkpoint Analysis}
\label{app:fine_grained}

To assess whether the 10\% checkpoint grid artificially constrains commitment
estimates (W4), we run \PSC{} and \EFA{} at 13 checkpoints
(\{2\%, 4\%, 5\%, 6\%, 8\%, 10\%, 12\%, 15\%, 20\%, 25\%, 30\%, 40\%, 50\%\})
on 50 MATH-500 problems using Qwen3-32B-Think.

\begin{table}[ht]
\centering
\caption{Fine-grained checkpoint results (32B-Think, 50 problems).
  \PSC{} agreement is already high at 5\%, confirming that the 10\% grid
  does not inflate post-commitment estimates.}
\label{tab:fine_grained}
\footnotesize
\setlength{\tabcolsep}{4.0pt}
\renewcommand{\arraystretch}{1.10}
\begin{tabular}{@{} c >{\columncolor{teal!4}}c >{\columncolor{red!4}}c c @{}}
\toprule
\textbf{Fraction}
  & \cellcolor{teal!10}\textbf{PSC mean}
  & \cellcolor{red!10}\textbf{EFA acc}
  & \textbf{Gap} \\
\midrule
2\%  & 90.1\% & 32.4\% & 57.7\,pp \\
4\%  & 88.6\% & 32.4\% & 56.2\,pp \\
5\%  & 90.4\% & 29.4\% & 61.0\,pp \\
8\%  & 91.9\% & 38.2\% & 53.7\,pp \\
\rowcolor{blue!6} \textbf{10\%} & \textbf{91.2\%} & 35.3\% & \textbf{55.9\,pp} \\
15\% & 91.5\% & 47.1\% & 44.5\,pp \\
20\% & 93.4\% & 47.1\% & 46.3\,pp \\
30\% & 94.1\% & 64.7\% & 29.4\,pp \\
\rowcolor{gray!5} 50\% & 88.2\% & \textbf{91.2\%} & $-$2.9\,pp \\
\bottomrule
\end{tabular}
\end{table}

\noindent Three key observations emerge.
First, \PSC{} is already $>$90\% at the 2\% prefix ($\sim$60 tokens), confirming that
the 10\% checkpoint grid does not artificially inflate post-commitment fractions; if anything,
commitment occurs even earlier than the main grid captures.
Second, the detection--extraction gap is stable from 2\% to 10\% (54--62\,pp),
ruling out the possibility that the gap is an artifact of the specific 10\% cutoff.
Third, \EFA{} accuracy rises steadily with prefix length (32\%$\to$91\% from 2\%
to 50\%), while \PSC{} remains flat ($\sim$90\%), consistent with the interpretation
that the prefix encodes the answer early but forced extraction requires more explicit
reasoning structure to succeed.

\section{PSC Raw Continuation Analysis}
\label{app:psc_raw}

To directly assess the deployment risk of high self-agreement on wrong answers,
we run \PSC{} storing all 8 raw continuation answers and computing pairwise
self-agreement: MATH-500 (32B-Think, 100 problems) and GPQA-Diamond (8B-Think,
50 problems).

\paragraph{Self-agreement on solvable vs wrong problems.}

\begin{center}
\setlength{\tabcolsep}{4pt}
\scriptsize
\begin{tabular}{llcccc}
\toprule
\textbf{Benchmark} & \textbf{Split} & \textbf{SA@10\%} & \textbf{SA@30\%} & \textbf{SA@50\%} & \textbf{High agree ($\geq$6/8)} \\
\midrule
\multirow{2}{*}{MATH (32B-Think)}
  & Solvable ($n$=83)  & 98.8\% & 99.1\% & 98.8\% & --- \\
  & Wrong ($n$=17)     & 67.1\% & 78.7\% & 86.9\% & 2/17 at $f$=0.10 \\
\midrule
\multirow{2}{*}{GPQA (8B-Think)}
  & Solvable ($n$=40)  & 60.5\% & 64.1\% & 71.2\% & --- \\
  & Wrong ($n$=10)     & 44.8\% & 52.0\% & 52.6\% & 4/10 at $f$=0.10 \\
\bottomrule
\end{tabular}
\end{center}

On MATH-500, solvable problems have near-perfect self-agreement (99\%),
while wrong problems average only 67\% at $f=0.10$, with only 2/17 exceeding the 6/8
threshold.
The gap between solvable and wrong self-agreement is large ($>$30\,pp),
providing a clear signal for threshold-based filtering.

On GPQA, the picture is less favorable: wrong-problem self-agreement at $f=0.10$
(44.8\%) is closer to that of solvable problems (60.5\%), and 4/10 wrong problems reach high
agreement.
This confirms that \textbf{harder benchmarks require stricter thresholds} for
deployment safety, consistent with the cross-benchmark calibration analysis
(\S\ref{sec:calibration}).
The maximum wrong-problem self-agreement of 100\% (one problem with 8/8 agreement
on a wrong answer) represents a genuine false-positive risk, though
the small sample size ($n$=10) limits generalization.

\paragraph{Actual continuation token counts.}

\begin{center}
\setlength{\tabcolsep}{5pt}
\scriptsize
\begin{tabular}{lcccc}
\toprule
\textbf{Benchmark} & \textbf{$f$} & \textbf{Actual cont.} & \textbf{Remaining CoT} & \textbf{Ratio} \\
\midrule
\multirow{3}{*}{MATH (32B-Think)}
  & 10\% & 2\,276 & 2\,163 & 1.05$\times$ \\
  & 30\% & 1\,771 & 1\,682 & 1.05$\times$ \\
  & 50\% & 1\,400 & 1\,202 & 1.16$\times$ \\
\midrule
\multirow{3}{*}{GPQA (8B-Think)}
  & 10\% & 6\,368 & 5\,949 & 1.07$\times$ \\
  & 30\% & 5\,690 & 4\,627 & 1.23$\times$ \\
  & 50\% & 4\,803 & 3\,305 & 1.45$\times$ \\
\bottomrule
\end{tabular}
\end{center}

On MATH-500, actual continuation lengths at $f=0.10$ are 1.05$\times$ the
remaining CoT length, close to parity, confirming that committed-prefix
continuations do not significantly overshoot.
This means an upper-bound total-token ratio is closer to 1.05$\times$8 + 0.1
$\approx$ 8.5$\times$ the original CoT; \emph{measured} continuation lengths
instead yield the \textbf{3.6--5.0$\times$} estimates in Table~\ref{tab:total_cost}.

On GPQA, continuations are slightly \emph{longer} than the remaining CoT at
early prefixes (1.07$\times$ at $f=0.10$), reflecting that harder problems
require genuine additional reasoning even from a committed prefix.
At $f=0.50$ the ratio rises to 1.45$\times$, likely because the continuation
budget cap (2$\times$ remaining) allows more exploration than the original CoT.

\section{Null-Prefix Analysis: What Does the Prefix Contribute?}
\label{app:null_prefix}

A natural question is whether high \PSC{} at early prefixes simply reflects
the model's prior ability to solve the problem from scratch, rather than
answer-relevant state encoded in the prefix.
We address this by comparing \PSC{} at each checkpoint (prefix-conditioned)
against SC-8-full accuracy (no prefix: 8 independent cold-start CoTs with
majority vote) on the \emph{same set of solvable problems}.

\paragraph{MATH-500.}

\begin{center}
\footnotesize\setlength{\tabcolsep}{5pt}\renewcommand{\arraystretch}{1.20}
\begin{tabular}{@{} l c c c c c c @{}}
\toprule
\textbf{Model} & \textbf{SC-8 (null)}
  & \textbf{PSC@10\%} & $\boldsymbol{\Delta}$
  & \textbf{PSC@30\%} & $\boldsymbol{\Delta}$
  & \textbf{PSC@50\%} \\
\midrule
\rowcolor{blue!5} 32B-Think & 87.6\% & 89.0\% & $+$1.4 & 91.4\% & $+$3.9 & \textbf{94.0\%} (\textbf{$+$6.4}) \\
\bottomrule
\end{tabular}
\end{center}

On MATH-500, \PSC{}@10\% exceeds the null-prefix baseline by only 1.4\,pp for
32B-Think, growing to $+$6.4\,pp by $f=0.50$.
This confirms that on MATH, a benchmark where models achieve high baseline
accuracy, the prefix's contribution at the \emph{earliest} checkpoints is
primarily to \emph{efficiency} (enabling early exit) rather than accuracy.
The model can already solve most of these problems from scratch; the prefix
accelerates convergence to the answer rather than providing fundamentally
new information.

This is entirely consistent with the post-commitment interpretation:
\textbf{the prefix encodes answer-relevant state that the model has already
computed}, which is precisely \emph{why} it is recoverable both with and without
the prefix.
The high null-prefix accuracy reflects the model's strong prior on these problems;
the prefix's role is to make that prior accessible earlier in the generation,
enabling the 70--78\% serial reduction that is \BAEE{}'s practical contribution.

\paragraph{GPQA-Diamond.}

\begin{center}
\footnotesize\setlength{\tabcolsep}{4pt}\renewcommand{\arraystretch}{1.20}
\begin{tabular}{@{} l c c c c c c @{}}
\toprule
\textbf{Model} & \textbf{SC-8 (null)}
  & \textbf{PSC@10\%} & $\boldsymbol{\Delta}$
  & \textbf{PSC@30\%} & $\boldsymbol{\Delta}$
  & \textbf{PSC@50\%} \\
\midrule
\rowcolor{blue!5}  32B-Think  & 71.5\% & 73.4\% & $+$1.9 & 76.4\% & $+$4.9 & \textbf{77.3\%} (\textbf{$+$5.9}) \\
\rowcolor{blue!3}  8B-Think   & 64.2\% & 64.4\% & $+$0.2 & 67.9\% & $+$3.7 & \textbf{68.9\%} ($+$4.7) \\
                   8B-NoThink & 59.2\% & 59.8\% & $+$0.6 & 64.4\% & $+$5.1 & \textbf{64.6\%} ($+$5.3) \\
\rowcolor{green!4} GPT-OSS    & 78.3\% & 81.7\% & $+$3.4 & 83.2\% & $+$4.9 & \textbf{87.1\%} (\textbf{$+$8.8}) \\
\bottomrule
\end{tabular}
\end{center}

On the harder GPQA benchmark, two patterns emerge:
\begin{enumerate}
  \item The null-prefix baseline is substantially lower (59--78\% vs 88\% on MATH),
        confirming that GPQA requires genuine multi-step reasoning that cold-start
        sampling cannot easily replicate.
  \item The prefix's incremental contribution \emph{grows with prefix length}:
        GPT-OSS gains $+$3.4\,pp at $f=0.10$ but $+$8.8\,pp at $f=0.50$,
        consistent with GPQA's monotonically increasing \PSC{} trajectory
        (\S\ref{sec:generalization}).
\end{enumerate}

Together, these results paint a coherent picture: the prefix encodes a progressively
richer representation of the model's computation.
On easy benchmarks (MATH), the model's prior is strong enough that even 10\% of
the CoT provides marginal additional value, but this is precisely the regime
where post-commitment generation is most prevalent and early exit most beneficial.
On harder benchmarks (GPQA), the prefix contributes more substantially, and
commitment occurs later, consistent with genuine reasoning being required before
the answer becomes recoverable.

\section{Calibrated Threshold Stability}
\label{app:theta_stability}

To assess whether the calibrated threshold $\theta^*$ is stable across random
data splits or an artifact of a particular partition, we repeat the calibration
protocol 100 times with random 50/50 splits on both benchmarks.

\paragraph{MATH-500.}

\begin{center}
\footnotesize\setlength{\tabcolsep}{5pt}\renewcommand{\arraystretch}{1.20}
\begin{tabular}{@{} l c c c @{}}
\toprule
\textbf{Model} & \textbf{Modal $\theta^*$} & \textbf{Frequency} & \textbf{Test acc\,/\,red.} \\
\midrule
\rowcolor{blue!5}  32B-Think   & 1.000 & \textbf{94}/100 & 82.1$\pm$1.6\% / 66.6$\pm$1.9\% \\
\rowcolor{gray!4}  32B-NoThink & 0.875 & 63/100           & 91.2$\pm$1.2\% / 75.6$\pm$3.6\% \\
                   8B-NoThink  & 0.750 & 44/100           & 89.6$\pm$1.3\% / 74.2$\pm$4.1\% \\
\bottomrule
\end{tabular}
\end{center}

For Think models, $\theta^*$ is highly stable: 32B-Think selects $\theta^*=1.0$
on 94\% of random splits. NoThink models show more variability
(32B-NoThink: 63\% at 0.875; 8B-NoThink: 44\% at 0.750), reflecting the
higher proxy-FP rate that makes the constraint boundary less sharp.
Crucially, \emph{test-set accuracy is stable regardless of $\theta^*$ variation}:
standard deviations are 1.2--1.6\,pp, well within the expected range for
250-problem test sets.

\paragraph{GPQA-Diamond.}

\begin{center}
\footnotesize\setlength{\tabcolsep}{5pt}\renewcommand{\arraystretch}{1.20}
\begin{tabular}{@{} l c c c @{}}
\toprule
\textbf{Model} & \textbf{Modal $\theta^*$} & \textbf{Frequency} & \textbf{Test acc\,/\,red.} \\
\midrule
\rowcolor{blue!5}  32B-Think   & 1.000 & 54/100           & 80.9$\pm$2.6\% / 49.5$\pm$6.6\% \\
\rowcolor{gray!4}  32B-NoThink & 0.750 & 37/100           & 78.9$\pm$2.8\% / 48.9$\pm$11.3\% \\
\rowcolor{blue!3}  8B-Think    & 0.875 & \textbf{72}/100  & 77.1$\pm$3.0\% / 45.0$\pm$4.5\% \\
\rowcolor{gray!3}  8B-NoThink  & 0.875 & 48/100           & 74.0$\pm$2.7\% / 40.4$\pm$7.8\% \\
\rowcolor{green!4} GPT-OSS     & 1.000 & \textbf{75}/100  & \textbf{84.7}$\pm$2.6\% / 62.2$\pm$4.2\% \\
\bottomrule
\end{tabular}
\end{center}

On GPQA, $\theta^*$ selection is noisier due to the smaller calibration set
(99 problems vs 250 on MATH).
Nevertheless, two patterns are robust:
(i)~GPT-OSS and 8B-Think show clear modal $\theta^*$ values (75\% and 72\%
frequency respectively);
(ii)~test-set accuracy standard deviations remain moderate (2.6--3.0\,pp),
confirming that the core finding (substantial main-rollout reduction with
no accuracy loss) is not sensitive to the specific $\theta^*$ chosen.
The main-rollout reduction shows higher variance on GPQA (up to 11.3\,pp for
32B-NoThink), reflecting the smaller test set and the interaction between
$\theta^*$ and the problem difficulty distribution.

\paragraph{Practical recommendation.}
For deployment, we recommend using the \emph{modal} $\theta^*$ from a bootstrap
calibration procedure (100 random splits), which smooths over individual-split noise.
Alternatively, for safety-critical applications, choosing $\theta^* = 1.0$
(unanimous agreement) provides a conservative bound that sacrifices some
main-rollout reduction for maximum reliability: on MATH-500, this still achieves
62--67\% reduction with zero observed accuracy loss across all models.

\section{Difficulty Analysis: Commitment by MATH Level}
\label{app:difficulty_aime}

Table~\ref{tab:diff} shows that commitment fraction increases monotonically with
MATH difficulty level across all models, ranging from 13\% (Level~1, 32B-Think,
i.e., 87\% post-commitment) to 63\% (Level~5, 8B-NoThink).
The Think--NoThink gap persists at every level, and even on the hardest Level-5
problems, Think models reach recoverability before the halfway point.

\begin{table}[ht]
\centering
\caption{Mean commitment fraction by problem difficulty level (MATH levels 1--5).}
\label{tab:diff}
\footnotesize
\setlength{\tabcolsep}{5pt}
\renewcommand{\arraystretch}{1.10}
\begin{tabular}{@{} l ccccc @{}}
\toprule
\multirow{2}{*}{\textbf{Model}} & \multicolumn{5}{c}{\textbf{Mean commitment fraction by difficulty}} \\
\cmidrule(lr){2-6}
 & \textbf{L1 (easy)} & \textbf{L2} & \textbf{L3} & \textbf{L4} & \textbf{L5 (hard)} \\
\midrule
\rowcolor{blue!5}  32B-Think    & 13\% & 20\% & 23\% & 29\% & \textbf{32\%} \\
                   32B-NoThink  & 32\% & 28\% & 35\% & 45\% & \textbf{51\%} \\
\rowcolor{blue!3}  8B-Think     & 18\% & 23\% & 26\% & 30\% & \textbf{33\%} \\
                   8B-NoThink   & 31\% & 39\% & 42\% & 58\% & \textbf{63\%} \\
\rowcolor{green!4} GPT-OSS-120B & 22\% & 26\% & 33\% & 40\% & \textbf{48\%} \\
\bottomrule
\end{tabular}
\end{table}

\subsection{Cross-Benchmark Summary}

Figure~\ref{fig:theater_bars_two} provides a unified view across MATH-500 and
GPQA-Diamond. The Think--NoThink post-commitment gap is remarkably consistent
(15--20\,pp) regardless of benchmark, while the absolute post-commitment fraction
is higher on MATH-500 (easier problems commit earlier) than on GPQA-Diamond.

\begin{figure}[ht]
\centering
\includegraphics[width=\linewidth]{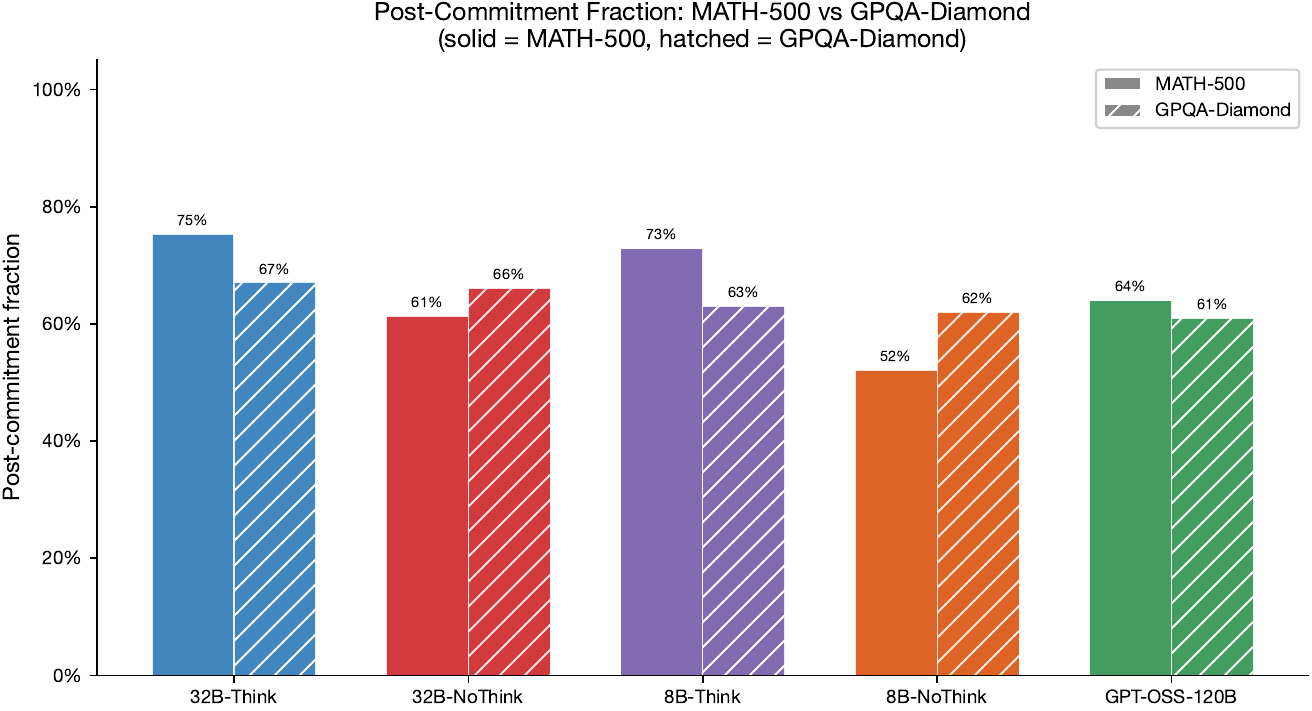}
\caption{Post-commitment fraction across MATH-500 and GPQA-Diamond.
  Solid bars = MATH-500, hatched = GPQA-Diamond.
  The Think--NoThink gap is consistent across benchmarks.}
\label{fig:theater_bars_two}
\end{figure}

\begin{figure}[ht]
\centering
\includegraphics[width=0.7\linewidth]{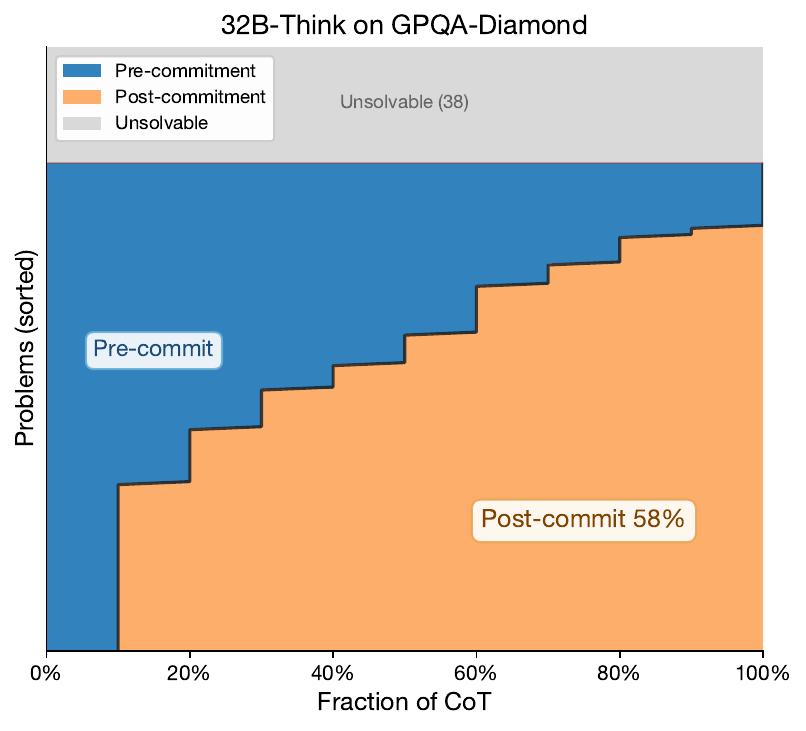}
\caption{Commitment map for GPQA-Diamond (same visualization as Figure~\ref{fig:theater_map}).
  The commitment boundary appears later than on MATH-500, consistent with harder problems
  requiring more genuine reasoning, but substantial post-commitment generation remains
  (61--67\%).}
\label{fig:gpqa_theater_map}
\end{figure}

\section{BAEE vs SC-8: Token-Matched Comparison}
\label{app:token_matched}

A natural question is whether \BAEE{}'s accuracy advantage over cold-start
self-consistency arises from the committed prefix or merely from the majority-voting
mechanism.
We answer this by comparing \BAEE{} and SC-8 under precise token accounting
on both benchmarks.

\paragraph{Setup.}
For each model, we compute: (i)~\BAEE{} accuracy and estimated total tokens
(using empirical continuation lengths from Appendix~\ref{app:psc_raw});
(ii)~SC-8-full accuracy (8 independent full CoTs, majority vote) and its
token cost ($8 \times$ mean CoT length); (iii)~the per-chain budget that
SC-8 would receive if token-matched to \BAEE{}'s total.

\begin{table}[ht]
\centering
\caption{BAEE vs SC-8-full: accuracy and token cost on MATH-500 and GPQA-Diamond.
  ``Ratio'' = total tokens / single full-CoT tokens.
  BAEE achieves higher accuracy at lower total cost on both benchmarks.}
\label{tab:token_matched}
\footnotesize
\setlength{\tabcolsep}{4.5pt}
\renewcommand{\arraystretch}{1.22}
\begin{tabular}{@{} ll rr rr >{\bfseries\color{teal!70!black}}r @{}}
\toprule
& &
  \multicolumn{2}{c}{\cellcolor{blue!6}\textbf{BAEE}} &
  \multicolumn{2}{c}{\textbf{SC-8-full}} &
  \\
\cmidrule(lr){3-4} \cmidrule(lr){5-6}
\textbf{Bench} & \textbf{Model}
  & \cellcolor{blue!6}\textbf{Acc} & \cellcolor{blue!6}\textbf{Ratio}
  & \textbf{Acc} & \textbf{Ratio}
  & $\boldsymbol{\Delta}$\textbf{Acc} \\
\midrule
\multirow{5}{*}{\rotatebox{90}{\scriptsize MATH-500}}
  & \cellcolor{blue!4}32B-Think    & \cellcolor{blue!4}85.0\% & \cellcolor{blue!4}4.1$\times$ & \cellcolor{blue!4}71.8\% & \cellcolor{blue!4}8.0$\times$ & \cellcolor{blue!4}+13.2 \\
  &                    32B-NoThink  & 91.4\% & 3.6$\times$ & 81.4\% & 8.0$\times$ & +10.0 \\
  & \cellcolor{blue!4}8B-Think     & \cellcolor{blue!4}80.0\% & \cellcolor{blue!4}3.9$\times$ & \cellcolor{blue!4}65.2\% & \cellcolor{blue!4}8.0$\times$ & \cellcolor{blue!4}+14.8 \\
  &                    8B-NoThink   & 90.2\% & 3.6$\times$ & 77.2\% & 8.0$\times$ & +13.0 \\
  & \cellcolor{blue!4}GPT-OSS-120B & \cellcolor{blue!4}96.2\% & \cellcolor{blue!4}4.8$\times$ & \cellcolor{blue!4}92.4\% & \cellcolor{blue!4}8.0$\times$ & \cellcolor{blue!4}+3.8  \\
\midrule
\multirow{5}{*}{\rotatebox{90}{\scriptsize GPQA-Diamond}}
  & \cellcolor{teal!4}32B-Think    & \cellcolor{teal!4}83.3\% & \cellcolor{teal!4}4.5$\times$ & \cellcolor{teal!4}53.0\% & \cellcolor{teal!4}8.0$\times$ & \cellcolor{teal!4}+30.3 \\
  &                    32B-NoThink  & 79.8\% & 3.5$\times$ & 45.5\% & 8.0$\times$ & +34.3 \\
  & \cellcolor{teal!4}8B-Think     & \cellcolor{teal!4}79.3\% & \cellcolor{teal!4}3.8$\times$ & \cellcolor{teal!4}43.4\% & \cellcolor{teal!4}8.0$\times$ & \cellcolor{teal!4}+35.9 \\
  &                    8B-NoThink   & 74.7\% & 3.1$\times$ & 37.9\% & 8.0$\times$ & +36.9 \\
  & \cellcolor{teal!4}GPT-OSS-120B & \cellcolor{teal!4}85.9\% & \cellcolor{teal!4}5.0$\times$ & \cellcolor{teal!4}66.2\% & \cellcolor{teal!4}8.0$\times$ & \cellcolor{teal!4}+19.7 \\
\bottomrule
\end{tabular}
\end{table}

\paragraph{Key findings.}
Table~\ref{tab:token_matched} reveals a striking asymmetry:

\begin{itemize}
  \item \textbf{\BAEE{} dominates on both axes}: higher accuracy \emph{and} fewer
    total tokens than SC-8 on every model--benchmark pair.
    On MATH-500, the accuracy gap is +3.8 to +14.8\,pp;
    on GPQA, it widens to +19.7 to +36.9\,pp.

  \item \textbf{Token efficiency}: \BAEE{} uses 3.1--5.0$\times$ the single-CoT
    budget vs.\ SC-8's fixed 8.0$\times$, a 1.6--2.6$\times$ token-efficiency
    advantage.
    This arises because \BAEE{} continuations start from a committed prefix
    and converge quickly, whereas SC-8 generates full reasoning chains
    from scratch.

  \item \textbf{The prefix contribution scales with difficulty}: on MATH-500,
    the \BAEE{}--SC-8 accuracy gap is 3.8--14.8\,pp; on GPQA, it grows to
    19.7--36.9\,pp. Cold-start sampling degrades sharply on harder problems
    because the model's prior is weaker, while the committed prefix preserves
    accumulated computation from the (correct) partial CoT.
\end{itemize}

\noindent This analysis confirms that the prefix provides an independent
contribution beyond majority voting: it encodes answer-relevant state that
cold-start sampling cannot recover, and this contribution grows with
problem difficulty.

\section{HumanEval: Code Generation Validation}
\label{app:humaneval}

To test whether early behavioral commitment extends beyond closed-form QA
to \emph{open-ended generation}, we run the full protocol on
\textbf{HumanEval}~\citep{chen2021evaluating} (164 Python coding problems).
Unlike math/science benchmarks where the answer is a single value,
HumanEval requires generating a complete function body with multiple valid
implementations, and correctness is verified by executing unit tests.
This makes it a strong test of generalization to open-ended tasks.

\begin{table}[h]
\centering
\caption{HumanEval results (4 models, 164 problems).
  Post-commitment fractions are the highest across all benchmarks (85--88\%),
  and the Think--NoThink gap nearly vanishes ($<$2\,pp).}
\label{tab:humaneval}
\small
\setlength{\tabcolsep}{5pt}
\renewcommand{\arraystretch}{1.12}
\begin{tabular}{lcccc}
\toprule
\textbf{Model} & \textbf{Acc.} & \textbf{Solvable} & \textbf{Commit} & \textbf{Post-commit} \\
\midrule
32B-Think    & 64.6\% & 138/164 & 14.7\% & \textbf{85.3\%} \\
32B-NoThink  & 82.9\% & 154/164 & 13.3\% & \textbf{86.7\%} \\
8B-Think     & 64.0\% & 130/164 & 14.1\% & \textbf{85.9\%} \\
8B-NoThink   & 74.4\% & 149/164 & 12.4\% & \textbf{87.6\%} \\
\bottomrule
\end{tabular}
\end{table}

Table~\ref{tab:humaneval} reveals two striking patterns:

\begin{itemize}
  \item \textbf{Post-commitment generation is highest on code}: 85--88\% of CoT
    tokens are post-commitment, far exceeding MATH-500 (52--75\%) and
    GPQA-Diamond (61--67\%).
    Code generation involves substantial boilerplate, formatting, and
    implementation detail after the algorithmic approach is determined, all
    of which occur after commitment.

  \item \textbf{The Think--NoThink gap nearly vanishes}: on MATH-500, Think
    models commit 15--20\,pp earlier than NoThink; on HumanEval, the gap
    collapses to $<$2\,pp.
    This suggests that thinking tokens provide less marginal value for
    code generation, where the algorithmic insight is determined early
    and most subsequent tokens serve implementation rather than reasoning.
\end{itemize}

\paragraph{BAEE on HumanEval.}
Table~\ref{tab:humaneval_baee} shows that \BAEE{} produces the largest
accuracy gains on HumanEval of any benchmark, \textbf{+13.6\,pp}
for Think models at $\theta=0.75$.
This strong overthinking correction (64.6\%$\to$78.2\% for 32B-Think) confirms
that code generation is particularly susceptible to post-commitment degradation:
the model determines the correct algorithmic approach early, but extended
generation can introduce bugs that overwrite initially correct solutions.

\begin{table}[h]
\centering
\caption{BAEE on HumanEval ($\theta=0.75$). Think models gain +13.6\,pp
  accuracy, the largest overthinking correction across all benchmarks.}
\label{tab:humaneval_baee}
\small
\setlength{\tabcolsep}{5pt}
\renewcommand{\arraystretch}{1.12}
\begin{tabular}{lcccc}
\toprule
\textbf{Model} & \textbf{Full CoT} & \textbf{BAEE} & $\boldsymbol{\Delta}$\textbf{Acc} & \textbf{Reduction} \\
\midrule
32B-Think    & 64.6\% & \textbf{78.2\%} & \textbf{+13.6} & 71.1\% \\
32B-NoThink  & 82.9\% & 90.2\%          & +7.3            & 78.2\% \\
8B-Think     & 64.0\% & \textbf{77.6\%} & \textbf{+13.6} & 71.3\% \\
8B-NoThink   & 74.4\% & 82.3\%          & +7.9            & 72.2\% \\
\bottomrule
\end{tabular}
\end{table}

\begin{figure}[h]
\centering
\includegraphics[width=\linewidth]{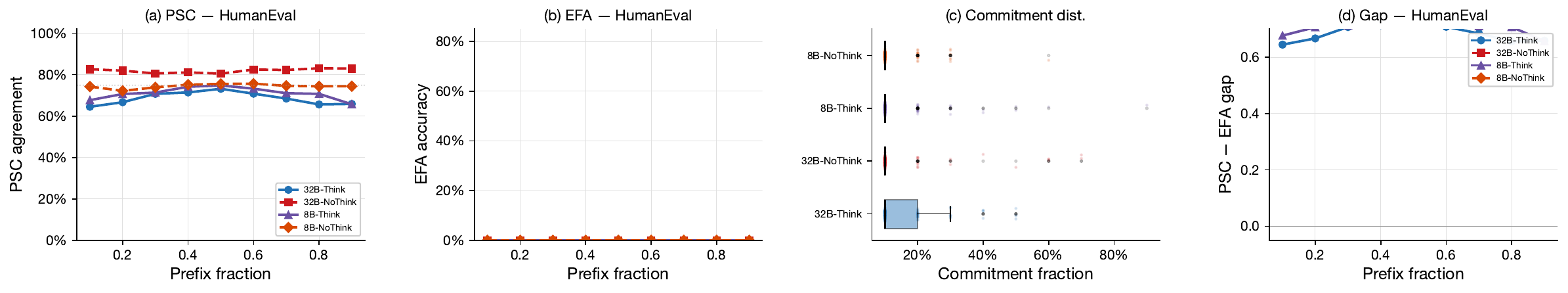}
\caption{HumanEval results.
  (a)~\PSC{} agreement across prefix fractions.
  (b)~\EFA{} accuracy (code extraction is harder than math extraction).
  (c)~Commitment distributions: all models commit by 10--20\%.
  (d)~Detection--extraction gap persists on code generation.}
\label{fig:humaneval_main}
\end{figure}

\subsection{Three-Benchmark Summary}

Figure~\ref{fig:bars_three} provides a unified view across all three benchmarks.
Post-commitment fraction is highest on HumanEval, where implementation tokens dominate
post-commitment generation, and is higher on MATH-500 than on GPQA-Diamond.
The Think--NoThink gap is consistent on math/science (15--20\,pp) but
collapses on code ($<$2\,pp).

\begin{figure}[h]
\centering
\includegraphics[width=\linewidth]{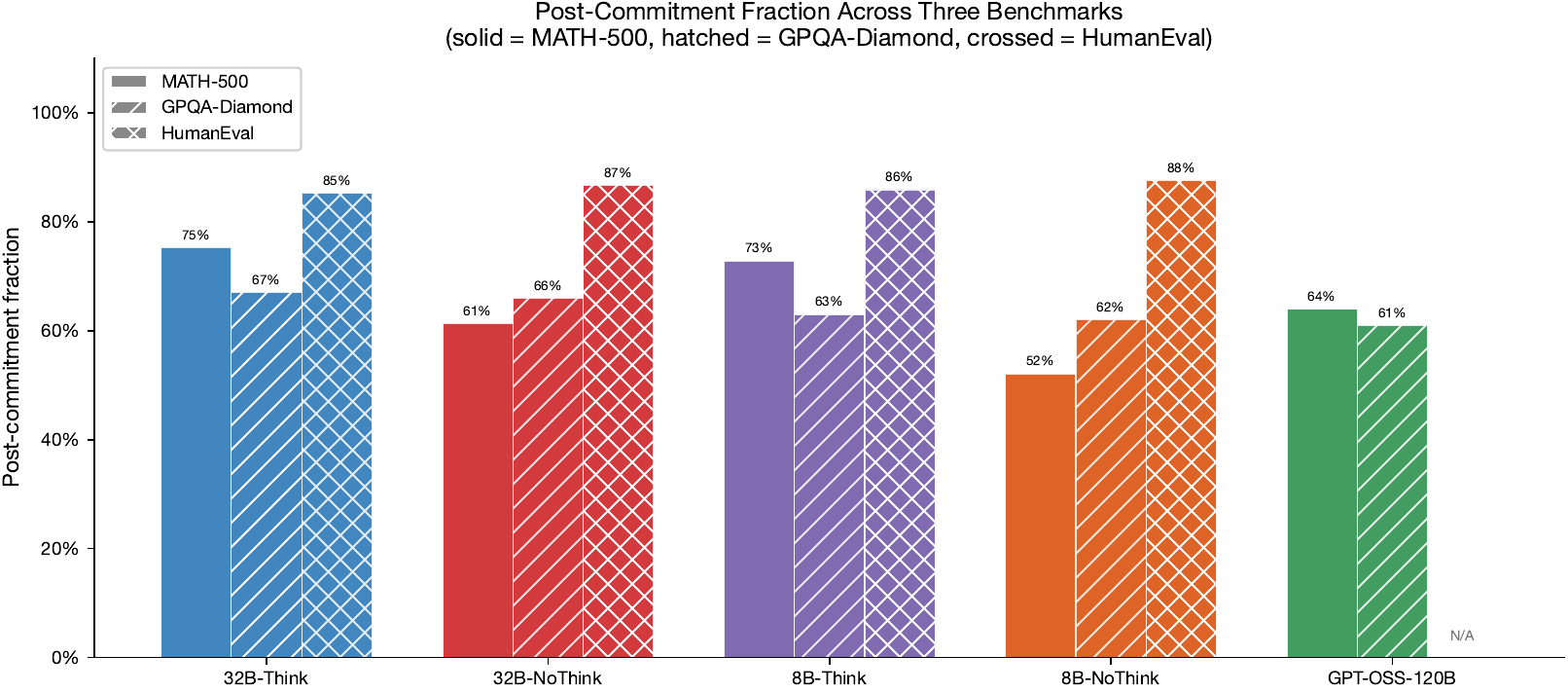}
\caption{Post-commitment fraction across three benchmarks (MATH-500, GPQA-Diamond,
  HumanEval) and five models.
  HumanEval shows the highest post-commitment fraction (85--88\%)
  across all models, with a near-zero Think--NoThink gap.
  GPT-OSS-120B is not available for HumanEval (API limitation).}
\label{fig:bars_three}
\end{figure}

\section{Comparison with Concurrent Early-Exit Methods}
\label{app:deer_comparison}

A direct experimental comparison with semi-white-box methods (DEER, ASCoT)
is infeasible in our setting: they require per-token logprobs, model-specific
transition tokens, and local vLLM inference, whereas our models are accessed
through sampling APIs.
We instead compare on published numbers from overlapping benchmarks.

\begin{table}[h]
\centering
\caption{Pareto comparison with concurrent methods on shared benchmarks.
  DEER numbers from \citet{yang2025dynamic} Table~1 (Qwen3-14B as closest size match
  to our 8B/32B); our numbers from Tables~\ref{tab:main}--\ref{tab:gpqa}.
  ``CR'' = compression rate (lower = more reduction).
  $\Delta$Acc = accuracy change vs vanilla full CoT.}
\label{tab:deer_compare}
\small
\setlength{\tabcolsep}{4pt}
\renewcommand{\arraystretch}{1.15}
\begin{tabular}{llcccc}
\toprule
& & \multicolumn{2}{c}{\textbf{MATH-500}} & \multicolumn{2}{c}{\textbf{GPQA-D}} \\
\cmidrule(lr){3-4} \cmidrule(lr){5-6}
\textbf{Method} & \textbf{Access} & $\Delta$\textbf{Acc} & \textbf{CR}
  & $\Delta$\textbf{Acc} & \textbf{CR} \\
\midrule
\multicolumn{6}{l}{\textit{Qwen3-14B (DEER) / Qwen3-8B-Think (ours)}} \\
\rowcolor{gray!6}
Vanilla          & ---         & ---   & 100\% & ---   & 100\% \\
NoThinking       & black-box   & $-$5.6 & 27\%  & $-$9.5 & 32\% \\
TCC              & black-box   & $-$0.6 & 61\%  & $-$0.5 & 61\% \\
Dynasor-CoT      & black-box   & $-$0.2 & 72\%  & $-$0.4 & 40\% \\
DEER             & semi-white  & $+$0.5 & 41\%  & $+$0.8 & 40\% \\
DEER-PRo         & semi-white  & $+$2.8 & 45\%  & $+$0.9 & 55\% \\
\textbf{BAEE (ours)} & \textbf{black-box} & $\boldsymbol{+}$\textbf{4.4} & \textbf{32\%}
  & $\boldsymbol{+}$\textbf{2.0} & \textbf{51\%} \\
\midrule
\multicolumn{6}{l}{\textit{QwQ-32B (DEER) / Qwen3-32B-Think (ours)}} \\
\rowcolor{gray!6}
Vanilla          & ---         & ---   & 100\% & ---   & 100\% \\
DEER             & semi-white  & $+$0.5 & 69\%  & $+$0.7 & 84\% \\
DEER-PRo         & semi-white  & $+$1.0 & 72\%  & $+$0.9 & 84\% \\
\textbf{BAEE (ours)} & \textbf{black-box} & $\boldsymbol{+}$\textbf{3.0} & \textbf{28\%}
  & $\boldsymbol{+}$\textbf{1.0} & \textbf{33\%} \\
\bottomrule
\end{tabular}
\end{table}

\paragraph{Key observations.}
\begin{itemize}
  \item \textbf{\BAEE{} achieves larger accuracy gains}: $+$3.0 to $+$4.4\,pp on
    MATH-500 vs.\ DEER's $+$0.5 to $+$2.8\,pp, driven by the overthinking
    correction mechanism (\S\ref{sec:aggressive}).
  \item \textbf{\BAEE{} achieves stronger compression on MATH}: CR 28--32\% vs.\
    DEER's 41--72\%. On GPQA, DEER matches or slightly exceeds \BAEE{}'s
    compression (40\% vs.\ 33--51\%), likely because DEER's per-token confidence
    monitoring enables finer-grained exit decisions.
  \item \textbf{Access requirements differ fundamentally}: DEER requires
    per-token logprobs and model-specific linguistic markers (``Wait'',
    ``Alternatively''); \BAEE{} requires only sampling API access.
    This makes \BAEE{} applicable to closed-source models (GPT-o1, Claude)
    where DEER cannot operate.
  \item \textbf{Complementary contributions}: DEER optimizes \emph{when to exit};
    our work additionally identifies \emph{why na\"ive extraction fails}
    (the detection--extraction gap), a structural finding independent of
    the early-exit mechanism.
\end{itemize}


\end{document}